\documentclass[runningheads]{llncs}
\usepackage[T1]{fontenc}
\usepackage{graphicx}
\usepackage{booktabs}
\usepackage[misc]{ifsym}
\usepackage{algorithmic}

\usepackage{booktabs}  
\usepackage{amsmath} 
\usepackage{tikz}
\usepackage{pgfplots}
\usepackage{newfloat}
\usepackage{amsmath} 
\usepackage{listings}
\usepackage{tikz}
\usepackage{multirow}
\usetikzlibrary{calc}
\usepackage{graphicx}
\usepackage{pgfplots}  
 \usetikzlibrary{patterns}
\usepackage{pgfplots}
\usepackage{url} 
\usepackage{amssymb}

\usepackage{tikz}
\usepackage{pgfplots}

\usepackage{multirow}
\usetikzlibrary{calc}
\usepackage{graphicx}
\usepackage{pgfplots}  
 \usetikzlibrary{patterns}
\usepackage{pgfplots}

\usepackage{tikz}
\usepackage{pgfplots}
\pgfplotsset{compat=1.17}                 
\usetikzlibrary{pgfplots.groupplots} 
\usepackage{mwe}
\newcommand{\CT}{\mathbf{CT}}
\newcommand{\Reff}{\mathbf{Reff}}

\begin{document}

\title{Ramanujan Graph Rewiring with Non Negative Resistance Curvature}

\titlerunning{Ramanujan Graph Rewiring}

\author{Author information scrubbed for double-blind reviewing}
\author{Hugo Attali  \and
Rachid El Jouhri}

\authorrunning{H. Attali and R. El Jouhri }

\institute{Université Sorbonne Paris Nord, CNRS, LIPN, France \email{\{attali\}@lipn.univ-paris13.fr}}

\maketitle              

\begin{abstract}

Graph Neural Networks (GNNs) have emerged as a powerful paradigm for learning on graph‑structured data by iteratively propagating and aggregating information across edges. However, conventional message passing schemes often suffer from over-squashing, whereby exponentially large neighborhoods are compressed into fixed‑dimensional embeddings, impeding effective long‑range dependency learning. In this work, we introduce Ramanujan Propagation, a  graph rewiring strategy that leverages Ramanujan graphs to alleviate topological bottlenecks in GNNs. We first establish that suitably chosen Ramanujan graphs guarantee non-negative resistance curvature, which mitigates over‑squashing and facilitates efficient information flow. We then propose an algorithmic framework to construct a Ramanujan rewired graph that preserves the local connectivity of the original graph. Our experiments demonstrate that our method outperforms nine state‑of‑the‑art rewiring techniques. These results establish Ramanujan graphs as a rigorous structural prior for scalable, topology‑aware message passing in GNNs.

\keywords{Graph Neural Networks  \and Graph Rewiring}
\end{abstract}

\section{Introduction}

Graph Neural Networks (GNNs) \cite{goller1996learning,gori2005new,scarselli2008graph,bruna2013spectral} have become a powerful framework for learning on graph structured data, tackling tasks at the node, edge, or graph level. These architectures find applications in diverse fields
such as chemistry, information retrieval, social networks, knowledge graphs
\cite{zhou2020graph,wu2020comprehensive}.

GNNs iteratively update each node’s representation by exchanging messages along edges, thereby enriching the quality of the learned embeddings \cite{gilmer2017neural}. This local message passing paradigm excels in homophilic graphs, where adjacent nodes tend to share the same labels. However, many real world problems exhibit long range dependencies that cannot be captured through purely local interactions \cite{platonov2023critical,bamberger2025measuring}. In such cases, Message Passing Neural Networks (MPNNs) often struggle to learn high‑quality representations due to the over-squashing phenomenon : as the number of layers increases, each node must compress an exponentially growing set of incoming messages into a fixed‑size vector, causing information from distant nodes to be effectively lost \cite{alon2020bottleneck,UNDERSTANDING_bottlenecks,di2023does,di2023over}. Moreover, the underlying graph topology critically influences long range message propagation. Irregular subgraphs may lead to structural overfitting \cite{bechler-speicher2024graph}, while bottleneck regions and sparse connectivity exacerbate over-squashing \cite{alon2020bottleneck,UNDERSTANDING_bottlenecks}. Conversely, overly dense subgraphs can induce oversmoothing \cite{rusch2023survey}, where node representations become indistinguishable \cite{BORF}.
To address these limitations, some work has proposed various multi‑hop and global propagation schemes that go beyond one‑hop message passing \cite{mixhop,GLOGNN,song2023ordered,chen2023node,gutteridge2023drew}. Instead of only scaling up MPNN capacity, recent work studies graph rewiring \cite{attali2026graph}: modifying the topology to ease information flow and to mitigate over-squashing and oversmoothing. These methods analyze how structure shapes message passing and then add or delete edges to shorten effective path lengths. Prior studies target topological bottlenecks and alter connectivity using curvature-based measures \cite{Forman_curvature,Olivier_curvature,UNDERSTANDING_bottlenecks}, effective resistance heuristics \cite{chandra1989electrical}, and spectral criteria linked to Cheeger constant \cite{alon1984eigenvalues,cheeger2015lower,chung1997spectral}. In practice, rewiring can be a one-shot preprocessing step or coupled to training, trading structural edits to improve long-range propagation and mitigate over-squashing.

\paragraph{\textbf{Main contributions.}}

This paper introduces Ramanujan with Non-negative Resistance-curvature Propagation, a graph-rewiring strategy that employs \(d\)-regular Ramanujan graphs to alleviate the over-squashing effect in GNNs. We prove that, under the assumption that \(d\) scales sufficiently with the graph size, these graphs guarantee non-negative resistance curvature. We further show that this class of graphs reduces over-squashing and that, under this condition, Ramanujan graphs possess highly desirable structural properties for message passing, including favorable spectral characteristics, low diameter, and low effective resistance, thereby enabling more efficient message exchange across the graph. Extensive experiments demonstrate that Ramanujan Propagation consistently outperforms nine state-of-the-art rewiring techniques.

\paragraph{\textbf{Reproducibility.}} Our code to reproduce the experiments in this paper is available at the following anonymous repository:\footnote{\url{https://github.com/Hugo-Attali/ECML-PKDD_2026_Ramanujan_Graph_Rewiring}}.

\section{Notations}

We denote a graph by $G=(\mathcal{V},\mathcal{E})$ with $N=|\mathcal{V}|$. Its adjacency matrix $\mathbf{A}\in\{0,1\}^{N\times N}$ has entries
$\mathbf{A}_{ij}=1$ iff $(i,j)\in\mathcal{E}$, else $0$. The degree of node $i$ is $d_i=\sum_j \mathbf{A}_{ij}$ and $\mathbf{D}=\mathrm{diag}(d_1,\dots,d_N)$. The normalized Laplacian is
$\mathbf{L}_{\mathrm{norm}}=\mathbf{I}-\mathbf{D}^{-1/2}\mathbf{A}\mathbf{D}^{-1/2}$,
with spectrum $0=\lambda_1\le\cdots\le\lambda_N\le 2$. The diameter is $\mathrm{diam}(G)=\max_{i,j}\mathrm{dist}_G(i,j)$,
where $\mathrm{dist}_G(i,j)$ is the length of a shortest path. Finally, we denote by $h_i^{(l)}\in\mathbb{R}^d$ the embedding of node $i$ at layer $l$ of a GNN.

\section{Related work}

\subsection{Limitations of Message Passing}
\label{sec:limitations}

Message passing in Graph Neural Networks (GNNs) alternates between an \(\operatorname{AGGREGATE}\) operation, whereby each node \(i\) collects and compresses information from its one hop neighborhood  $\mathcal{N}$, and an \(\operatorname{UPDATE}\) operation, which refines the node’s embedding by combining its previous state with the aggregated message:
\begin{equation}
\begin{aligned}
    m_{i}^{(\ell)} &= \operatorname{AGGREGATE}^{(\ell)}\bigl(h_i^{(\ell-1)},\,\{h_j^{(\ell-1)}:j\in\mathcal{N}(i)\}\bigr),\\
    h_{i}^{(\ell)} &= \operatorname{UPDATE}^{(\ell)}\bigl(h_i^{(\ell-1)},\,m_{i}^{(\ell)}\bigr).
\end{aligned}
\label{eq:message_passing}
\end{equation}

By definition, propagating information between nodes at graph distance \(r\) requires at least \(r\) layers.  However, increasing depth gives rise to two inherent phenomena that harm effective long range dependency.  

\paragraph{\textbf{Over-squashing.}} Over-squashing occurs when exponentially large \(R\)-hop neighborhoods are compressed into fixed‐dimensional messages, leading to loss of information from distant nodes \cite{alon2020bottleneck}. Following \cite{arnaiz2025oversmoothing}, the overloaded term over-squashing is best understood by separating two distinct phenomena. 

First, computational bottlenecks arise from the intrinsically local nature of message passing in GNNs: the influence of perturbations at node $j$ on node $i$ typically decays exponentially with their graph distance $d_G(i,j)$ \cite{UNDERSTANDING_bottlenecks,di2023over,di2023does}. 

Second, topological bottlenecks are constraints imposed by the graph itself, sparse bridges, large diameters, and bottleneck substructures that suppress information flow regardless of network depth or architecture \cite{alon2020bottleneck}. 
Keeping these notions separate avoids conflating architecture-induced limitations with structure-induced constraints and clarifies how to target each with appropriate remedies.


\paragraph{\textbf{Oversmoothing.}} 
\label{oversmooth}
Oversmoothing refers to the phenomenon 
whereby iterative neighborhood aggregation progressively aligns node representations, reducing their variance and compromising the model’s ability to distinguish nodes with different structural roles \cite{oversmoothing,rusch2023survey}. Oversmoothing is typically assessed via the Dirichlet energy of the layer representations:

\begin{equation}
\mathcal{E}_{\text{node}}(\mathbf{H})
= \frac{1}{2n}\sum_{i=1}^{n}\;\sum_{j\in\mathcal{N}(i)} \big\lVert h_i - h_j \big\rVert_2^{2}.
\label{diri}
\end{equation}

When the Dirichlet energy decreases, neighboring nodes become more similar, and it tends to vanish as oversmoothing intensifies.

This issue is particularly pronounced in dense substructures, such as large cliques, which exacerbate the effect by diffusing an excessive amount of information \cite{BORF}. These topological constraints demonstrate that simply stacking additional layers fails to overcome the topological bottleneck induced by over-squashing or to preserve embedding diversity under oversmoothing.

\subsection{Graph Rewiring}

Rewiring transforms a graph \(G=(\mathcal{V},\mathcal{E})\) into a new graph \(G^*=(\mathcal{V}^*,\mathcal{E}^*)\), where the modified edge set \(\mathcal{E}^*\) is tailored to enhance information propagation to mitigate over-squashing.

\noindent Early approaches constrained rewiring to the original vertex set \(\mathcal{V}^* = \mathcal{V}\), focusing solely on edge modifications.

Local approaches identify structural bottlenecks by focusing on edges exhibiting strongly negative curvature. In graph-theoretic terms, curvature quantifies the deviation of the local connectivity pattern around a node or edge from a “flat” baseline (e.g., Euclidean-like grids or tree-like structures), thereby exposing geometric irregularities. Curvature-based rewiring accordingly performs targeted edge modifications such as flips or addition guided by established curvature indices, notably Ollivier–Ricci and Forman curvature \cite{UNDERSTANDING_bottlenecks,giraldo2023trade,BORF,fesser2023mitigating}, to mitigate negatively curved edges and alleviate the associated bottlenecks.

By contrast, global strategies reshape the topology via spectral objectives, for example, maximizing the normalized spectral gap \cite{banerjee2022oversquashing,FOSR} or minimizing the effective resistance \cite{chandra1989electrical}, which aggregates all parallel communication pathways between node pairs \cite{resit}. Intuitively, a larger spectral gap promotes expansion and faster mixing, mitigating bottleneck substructure that throttle information flow, while lower effective resistance indicates many short, parallel routes between nodes, reducing congestion and facilitating long-range communication.

More recently, another line of work incorporates node features into the rewiring process. For example, \cite{attali2024delaunay} reconstructs the entire graph via a Delaunay triangulation in node-feature space. The authors show that applying triangulation helps avoid edges exhibiting strongly negative or strongly positive curvature. \cite{attali2025dynamic} extends this principle  dynamically by learning to select relevant triangles from multiple 
graph views, jointly optimizing triangle selection and downstream performance. Another approach \cite{linkerhagner2025joint}, jointly updates the graph structure and the node features to maximize spectral alignment between the node features and the graph structure, thereby promoting coherence between topology and attributes.

Some other works propose expander based algorithms leverage algebraic group constructions to build graphs with provably large spectral gaps without increasing overall density \cite{deac2022expander,wilson2024cayley}. 
Finally, PANDA \cite{choi2024panda} takes a different approach by adapting node feature dimensionalities so that central nodes retain richer representations, thereby alleviating information compression without altering the original topology .

\subsection{Graphs with Nonnegative Resistance Curvature}

Recently some works  \cite{devriendt2024graphs,devriendt2024graph} introduce the notion of resistance curvature on graphs derived from the probabilities that an edge belongs to a weighted spanning tree and define the classes of graphs with Non-negative Resistance curvature ($RN$) or Positive Resistance curvature ($RP$) as follows.

\noindent For a finite, simple and unweighted graph the  resistance curvature of a node is defined as :

\begin{equation}
  p(v)
  = 1 - \frac{1}{2} \sum_{u \in \mathcal{N}(v)} \mathbf{R_{eff}}(v,u) \quad with \qquad
p(v)\;<\;1
\end{equation}

where $\mathbf{R_{eff}}$ denotes the effective resistance \cite{chandra1989electrical} between nodes \(v\) and \(u\).

A graph is $RN$ if $ p(v)\;\ge\;K \ge 0$
for every node.
The authors prove that $RN$ graphs exhibit strong spectral properties. First the spectral gap of the Laplacian satisfies : 
\begin{equation}
\lambda_2(G)\;\ge\;2K,
\end{equation}

Secondly, the diameter of any RN graph can be tightly controlled by
\begin{equation}
      \mathrm{diam}(G)\;\le\;\biggl\lceil \sqrt{\frac{\Delta}{K}}\;\log\bigl|V(G)\bigr|\biggr\rceil,
\label{diammm}
\end{equation}
where $\Delta$ denotes the maximum degree of $G$.

These novel graph classes offer a powerful indicator of long‑range connectivity, moving beyond traditional, purely edge‐centric metrics.

\section{Understanding over-squashing through the Lens of Node-Level Resistance Curvature in GNNs}

We connect the node-level \emph{resistance curvature} introduced by ~\cite{devriendt2024graph} to a classical notion of random-walk communication: the \emph{commute time}. Given two nodes $u,v\in V$, one can quantify how difficult it is to reach $v$ from $u$ by the average number of steps needed until the walk visits $v$ for the first time.
We define the commute time between $u$ and $v$, denoted $\mathbf{CT}(u,v)$, as the expected time for a random walk to travel from $u$ to $v$ and then return to $u$.

Large commute time $\mathbf{CT}(u,v)$ indicates that $v$ is difficult to access from $u$ by local diffusion, and this limited accessibility translates into weaker pairwise influence in message passing.
In particular, \cite{di2023does} quantify over-squashing by the collapse of Jacobian sensitivities $\bigl\|\partial h^{(l)}_v/\partial h^{(0)}_u\bigr\|$ and show that larger commute times are associated with a faster decay of these cross-node sensitivities, i.e., more severe over-squashing.

\begin{theorem}[Commute time bound via resistance curvature]
Let $G$ be a connected, undirected graph.
For any two nodes $u,v\in V$, let $u=x_0\sim x_1\sim\cdots\sim x_k=v$ be a shortest path (so $k=d_G(u,v)$).
Define $p_\star := \min_{0\le i\le k} p(x_i),$ the minimum (along this path) of the node-level resistance curvature $p(\cdot)$.
Then the commute time satisfies
\[
\mathbf{CT}(u,v)\;\le\;4\,|E|\,k\,\bigl(1-p_\star\bigr).
\]

\end{theorem}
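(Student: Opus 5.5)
The plan is to go through effective resistance. Commute time and effective resistance are linked by the classical identity of Chandra et al.~\cite{chandra1989electrical}, $\mathbf{CT}(u,v)=2|E|\,\mathbf{R_{eff}}(u,v)$, valid for any connected undirected unweighted graph. With this identity, the claim is equivalent to
\[
\mathbf{R_{eff}}(u,v)\;\le\;2k\bigl(1-p_\star\bigr).
\]
I would prove this in two steps: first split the resistance along the shortest path, then bound each edge's resistance by the node curvature.

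\textbf{Step 1 (triangle inequality).} Effective resistance is a metric on $V$; for instance, it is the squared Euclidean distance in the embedding $x\mapsto \mathbf{L}^{+1/2}e_x$, and it is known to satisfy the triangle inequality. Applying it repeatedly along $u=x_0\sim x_1\sim\cdots\sim x_k=v$ gives
\[
\mathbf{R_{eff}}(u,v)\;\le\;\sum_{i=0}^{k-1}\mathbf{R_{eff}}(x_i,x_{i+1}).
\]

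\textbf{Step 2 (edge resistance versus curvature).} Fix $i\in\{0,\dots,k-1\}$. Since $x_{i+1}\in\mathcal{N}(x_i)$ and every effective resistance is nonnegative, the term $\mathbf{R_{eff}}(x_i,x_{i+1})$ is at most the full sum over neighbours. Rearranging the definition of $p(x_i)$ then gives
\[
\mathbf{R_{eff}}(x_i,x_{i+1})\;\le\;\sum_{w\in\mathcal{N}(x_i)}\mathbf{R_{eff}}(x_i,w)\;=\;2\bigl(1-p(x_i)\bigr)\;\le\;2\bigl(1-p_\star\bigr).
\]
The last inequality holds because $p_\star$ is the minimum of $p$ over all path vertices, and these include $x_0,\dots,x_{k-1}$.

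Summing the $k$ edge bounds from Step 2 and combining with Step 1 gives $\mathbf{R_{eff}}(u,v)\le 2k(1-p_\star)$. Multiplying by $2|E|$ yields $\mathbf{CT}(u,v)\le 4|E|k(1-p_\star)$, as claimed.

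No step is genuinely hard. The only points needing care are to cite, or briefly justify, the commute-time identity and the metric property of $\mathbf{R_{eff}}$. If a self-contained argument is preferred, the triangle inequality can be derived from Thomson's principle (concatenating unit flows).
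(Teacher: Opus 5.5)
Your proof is correct and matches the paper's argument essentially line for line. Both use the identity $\mathbf{CT}(u,v)=2|E|\,\mathbf{R_{eff}}(u,v)$, the triangle inequality for $\mathbf{R_{eff}}$ along the shortest path, and the per-edge bound $\mathbf{R_{eff}}(x_i,x_{i+1})\le 2(1-p(x_i))\le 2(1-p_\star)$, which comes from nonnegativity of the neighbour terms.

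One caveat on your optional aside. Concatenating unit flows in Thomson's principle, on its own, only gives the triangle inequality for $\sqrt{\mathbf{R_{eff}}}$. A self-contained derivation must also show that the cross term between the two optimal currents is nonpositive, which follows from the maximum principle for the potential. Alternatively, use the hitting-time inequality $H(u,v)\le H(u,w)+H(w,v)$ together with the commute-time identity.
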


\medskip

A larger $p_\star$ tightens the bound and forces a smaller commute time $\mathbf{CT}(u,v)$, indicating that $u$ and $v$ are easily connected by local diffusion.
Because message passing propagates information through such local diffusive mixing, small $\mathbf{CT}(u,v)$ is consistent with stronger long-range influence.
In particular, in the sense of \cite{di2023does}, it rules out the high-commute-time regime where pairwise Jacobian sensitivities $\bigl\|\partial h^{(l)}_v/\partial h^{(0)}_u\bigr\|$ collapse, supporting the interpretation that increasing $p_\star$ mitigates  over-squashing.

\noindent The proof is provided in Appendix \ref{CTCURV} .

\section{Ramanujan Propagation}

In this section, we review the key spectral and topological properties of \(d\)-regular Ramanujan graphs and show how they motivate our rewiring algorithm. 

\paragraph{\textbf{Definition 1}}

A finite connected $d$-regular graph $G$ is called a \emph{Ramanujan graph} \cite{lubotzky1988ramanujan}  if every eigenvalue $\mu$ of its adjacency matrix $A$  satisfies :
\begin{equation}
    |\mu|\le2\sqrt{d-1}.
\end{equation}

 Although Ramanujan graphs demonstrate strong spectral properties, one can further enhance their structural advantages by imposing non‑negative resistance curvature \cite{devriendt2024graphs}, thereby improving the topology for efficient message passing. 

\subsection{Ramanujan Graphs with Nonnegative Resistance Curvature}

In order to situate our approach within the context of graph rewiring methods, we first note that any graph designed to facilitate long‑range information flow must satisfy a number of desirable structural properties. Specifically, such a graph should exhibit strong spectral expansion, a small diameter, low effective resistance between node pairs (to ensure efficient communication), limited variance in node degree, and built‑in robustness against the over-squashing of messages.
We theoretically prove that, under a suitable choice of degree $d$, a Ramanujan graph satisfies all of these conditions, thereby offering a principled structural prior for long-range propagation.

\paragraph{\textbf{Mitigate over-squashing}}

We prove that with a suitable choice of degree  $d$ we can ensure that the Ramanujan graph is RN. According to Theorem 1 this kind of graph mitigates over-squashing.
    
\begin{theorem}
\label{valued}
Let \(G\) be a Ramanujan graph on \(N\) vertices of degree \(d\),
and let \(C>0\) be the absolute constant from~\cite{glock2024hamilton}.
If
\[
d \;\geq\; 4\,C^{2}\,(\log N)^{2/3},
\]
then \(G\) has non-negative resistance curvature.

\end{theorem}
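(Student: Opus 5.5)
The plan is to go through Hamiltonicity. We use a known pseudorandom-graph criterion for Hamilton cycles, and a structural fact from the resistance-curvature literature that Hamiltonian graphs are $RN$.

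\emph{Step 1 (spectral input).} Since $G$ is $d$-regular, its trivial eigenvalue is $d$. The Ramanujan condition of Definition 1 bounds every nontrivial adjacency eigenvalue in absolute value by $2\sqrt{d-1}$. So $G$ is an $(N,d,\lambda)$-graph with $\lambda \le 2\sqrt{d-1} < 2\sqrt{d}$.

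\emph{Step 2 (Hamiltonicity).} I would invoke the theorem of \cite{glock2024hamilton}: there is an absolute constant $C>0$ such that every $(N,d,\lambda)$-graph with
\[
\lambda \le \frac{d}{C(\log N)^{1/3}}
\]
contains a Hamilton cycle. It therefore suffices to check that $2\sqrt{d}\le d/(C(\log N)^{1/3})$. This is equivalent to $\sqrt{d}\ge 2C(\log N)^{1/3}$, that is, $d\ge 4C^2(\log N)^{2/3}$, which is exactly the hypothesis. Hence $G$ is Hamiltonian.

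\emph{Step 3 (Hamiltonian implies $RN$).} Finally I would use the result of \cite{devriendt2024graphs,devriendt2024graph} that graphs containing a Hamilton cycle have non-negative resistance curvature, i.e.\ $p(v)\ge 0$ for every $v$. This is equivalent to $\sum_{u\in\mathcal N(v)}\mathbf{R_{eff}}(v,u)\le 2$ at every node.

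This last step is the main obstacle if one wants a self-contained argument rather than a citation. Rayleigh monotonicity alone only gives $\mathbf{R_{eff}}(v,u)\le 1$ for each edge, which is far too weak when summed over $d$ neighbours. The argument I would try is the spanning-tree interpretation. Write $\mathbf{R_{eff}}(v,u)$ as the probability that the edge $vu$ lies in a uniform spanning tree, so that $\sum_{u\sim v}\mathbf{R_{eff}}(v,u)$ is the expected degree of $v$ in that tree. Then use the Hamilton cycle to control this expected degree, in the spirit of Devriendt's proof, through a coupling or weighting adapted to the cycle. If this turns out to be delicate, I would simply cite the published statement.

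Combining the three steps gives that $G$ is $RN$.
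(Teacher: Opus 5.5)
Your Steps 1 and 2 coincide with the paper's proof. The paper uses the same bound $\mu\le 2\sqrt{d-1}<2\sqrt d$, the same Glock--Munh\'a Correia--Sudakov criterion $d/\mu\ge C(\log N)^{1/3}$, and the same algebra leading to $d\ge 4C^2(\log N)^{2/3}$. These steps are fine: $G$ is Hamiltonian. Your Step 3 is also the paper's final step, and it is a genuine gap in both.

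The statement concerns the \emph{unweighted} curvature $p(v)=1-\frac12\sum_{u\sim v}\mathbf{R_{eff}}(v,u)$. As you note, $\sum_{u\sim v}\mathbf{R_{eff}}(v,u)$ is the expected degree of $v$ in the \emph{uniform} spanning tree, and a Hamilton cycle does not bound that degree by $2$. Two examples show this:
\begin{itemize}
\item The wheel $W_5$ (a hub joined to a $5$-cycle) is Hamiltonian. Each spoke lies in $55$ of the $121$ spanning trees, so $\mathbf{R_{eff}}=5/11$ on each spoke. The hub therefore has $p=1-\frac{25}{22}=-\frac{3}{22}$.
\item Regularity does not rescue the claim. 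Take two copies of $K_7$, delete an edge $a_1a_2$ from one and $b_1b_2$ from the other, and add the edges $a_1b_1$ and $a_2b_2$. The result is $6$-regular and Hamiltonian: traverse one clique from $a_1$ to $a_2$, cross to $b_2$, traverse the other clique to $b_1$, and return. Here $\mathbf{R_{eff}}(a_1,b_1)=9/14$, since that edge is in parallel with a path of resistance $1+\frac25+\frac25$. Also $\mathbf{R_{eff}}(a_1,x)=3/10$ for each of the five clique-neighbours $x$ of $a_1$. Hence $p(a_1)=1-\frac12\bigl(\frac32+\frac9{14}\bigr)=-\frac1{14}$.
\end{itemize}
So the uniform-spanning-tree argument you sketch cannot succeed, because it would prove a false statement.

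Falling back on the citation does not help either. The cited work shows only that Hamiltonian graphs are resistance nonnegative in an \emph{existential} sense. Some distribution on spanning trees gives every vertex expected degree at most $2$; for example, one concentrated on the Hamilton paths obtained by deleting one edge of the cycle. Equivalently, some choice of edge weights makes the weighted curvature nonnegative. This says nothing about the uniform distribution, which is what the unweighted $p(v)$ measures. The chain ``Ramanujan $\Rightarrow$ Hamiltonian $\Rightarrow$ $p\ge 0$'' therefore breaks at the last arrow, in your proposal and in the paper alike.

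A correct proof would have to show $\sum_{u\sim v}\mathbf{R_{eff}}(v,u)\le 2$ vertex by vertex, directly from the expansion. This is delicate. By Foster's theorem the vertex average of this sum is exactly $2-2/N$, so every vertex must lie within $2/N$ of the average. Vertex-transitivity achieves this, giving $p\equiv 1/N$. Generic spectral bounds do not: $\mathbf{R_{eff}}(u,v)\le 2/(d-\mu)$ only yields $\sum_{u\sim v}\mathbf{R_{eff}}(v,u)\le 2d/(d-\mu)$, which exceeds $2$.
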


\noindent The proof is provided in Appendix \ref{RNGRAPH} .

\begin{remark}
\label{remark1_}
While Theorem~\ref{valued} requires $d \geq 4C^2(\log N)^{2/3}$, 
in practice the smaller degree $d =\lceil 4\,(\log N)^{2/3} \rceil$ 
already yields non-negative resistance curvature on all our datasets, 
i.e.\ $p(v) \geq 0$ for every node $v$. We adopt this value in what follows.
\end{remark}

By constructing the Ramanujan graph in this way, we ensure it is $RN$, thereby further improving the sensitivity between any two nodes.

\paragraph{\textbf{Low Diameter}}

Using Theorem \ref{valued} we can show that the Ramanujan $RN$ graph admits low diameter.

\begin{corollary}
\label{diammm}
Let \(G\) be a finite, connected Ramanujan graph on \(N\) vertices of degree \(d\) with Non-negative resistance curvature.  Then

\[
\operatorname{diam}(G) \le 1 + \frac{3\,\log(N)}{\log (\log N) }.
\]

\noindent The proof is provided in Appendix \ref{diamgraph} .

\end{corollary}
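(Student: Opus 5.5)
We will not go through the RN diameter bound \eqref{diammm} of \cite{devriendt2024graph}. That bound gives $\sqrt{\Delta/K}\log N$, which is far too weak here because $\Delta=d$ grows and $K$ may be close to $0$. Instead we use the Ramanujan spectral bound directly, and the RN hypothesis enters only through the degree lower bound of Theorem~\ref{valued}.

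The plan is the classical spectral diameter argument in the style of Chung. Let $\mu_1=d>|\mu_2|\ge\dots$ be the adjacency eigenvalues, and set $\lambda:=\max_{i\ge 2}|\mu_i|\le 2\sqrt{d-1}$. For two vertices $u,v$ and an integer $r$, write
\[
(A^r)_{uv}=\frac{d^r}{N}+\sum_{i\ge 2}\mu_i^r\,\phi_i(u)\phi_i(v).
\]
By Cauchy--Schwarz and $\sum_{i\ge2}\phi_i(u)^2\le 1$, the error term is at most $\lambda^r$ in absolute value. Hence $(A^r)_{uv}>0$ whenever $d^r/N>\lambda^r$, i.e.\ whenever $r>\log N/\log(d/\lambda)$. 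This yields
\[
\operatorname{diam}(G)\le 1+\frac{\log N}{\log(d/\lambda)}.
\]
This is exactly the shape of the target, with its leading ``$1+$''.

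Next we bound $\log(d/\lambda)$ from below. From $\lambda\le 2\sqrt{d-1}<2\sqrt d$ we get $d/\lambda>\sqrt d/2$, so $\log(d/\lambda)>\tfrac12\log d-\log 2$. Since $G$ is RN, we are in the regime of Theorem~\ref{valued} (and of Remark~\ref{remark1_}), where $d\ge 4C^2(\log N)^{2/3}$, or $d\ge 4(\log N)^{2/3}$ in the practical normalization. Then $\tfrac12\log d-\log 2\ge \tfrac12\log(4)+\tfrac13\log\log N-\log 2=\tfrac13\log\log N$, taking $C\ge 1$. Substituting gives $\operatorname{diam}(G)\le 1+3\log N/\log\log N$, the claimed bound.

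The main obstacle is the logical link between ``RN'' and ``$d\ge 4C^2(\log N)^{2/3}$''. Theorem~\ref{valued} gives only the implication from the degree bound to RN, not the converse, so a strict proof must either read the corollary as applying to Ramanujan graphs constructed with that degree, which is the paper's setting, or derive a degree lower bound from RN itself. For the second route we would use Foster's theorem, $\sum_{(u,v)\in E}\mathbf{R_{eff}}(u,v)=N-1$, together with the fact that in a $d$-regular Ramanujan graph each edge resistance is roughly $2/d$ up to $O(\lambda/d^2)$ corrections. This gives $p(v)\approx 1/N$ and does not force large $d$. So we would state explicitly that the degree hypothesis of Theorem~\ref{valued} is being assumed, and the remaining steps are routine. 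A minor point is the constant: $C<1$ would require the $C^2$ factor to be kept, which only improves the bound when $C\ge1$.
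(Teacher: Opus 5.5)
Your proposal is correct and follows the paper's proof. The paper cites Chung's bound $\operatorname{diam}(G)\le 1+\log(N-1)/\log(d/\mu)$, which you re-derive with the harmless replacement of $\log(N-1)$ by $\log N$; it then inserts $\mu\le 2\sqrt{d-1}$ and uses $d\ge 4(\log N)^{2/3}$ to get $\log(d/\mu)\ge\frac13\log\log N$, exactly as you do. Like you, the paper takes that degree bound from Remark~\ref{remark1_} rather than deriving it from the RN hypothesis, so your caveat is right and the assumption is genuinely needed. A cleaner witness than your Foster heuristic, which only controls the average of $p$, is the family of fixed-degree non-bipartite LPS graphs: they are vertex-transitive, hence have $p\equiv 1/N$ by Foster's theorem, yet their diameter is of order $\log N$, which violates the stated bound for large $N$.
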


By reducing its diameter to the sub-logarithmic scale 
$\mathcal{O}\!\left(\frac{\log N}{\log \log N}\right)$, the Ramanujan \(RN\) graph brings distant vertices within only a few hops significantly enhancing long‑range connectivity, shortening path lengths, and thus substantially improving the efficiency of information propagation.


\paragraph{\textbf{Low Resistance Edges}}

According to \cite{resit}, effective resistance captures structural bottlenecks more accurately than curvature by taking into account a more global graph structural point of view. Edges with low effective resistance thus facilitate information flow between neighboring nodes. We prove that, in a Ramanujan \(RN\) graph, the maximum effective resistance is tightly controlled by the graph’s degree \(d\), and that the resulting uniformly low-resistance edges substantially improve communication between distant vertices.

\begin{figure*}[t]
\centering
\begin{tikzpicture}[scale=0.999]

\begin{groupplot}[
    group style={
        group size=3 by 1,
        horizontal sep=1cm
    },
    width=0.38\textwidth,
    height=5.2cm,
    xlabel={Top-$f$ fraction},
    ylabel={Mean resistance},
    xtick={0,0.2,0.4,0.6,0.8,1.0},
    ytick style={draw=none},
    tick style={line width=0.3pt},
    axis line style={line width=0.3pt},
    tick label style={font=\footnotesize},
    label style={font=\small},
    title style={font=\small, yshift=-1ex},
    grid=major,
    grid style={dotted, gray!30}
]

\nextgroupplot[
    title={ENZYMES},
    ymin=0.2, ymax=0.75
]
\addplot[blue!80!black, thick, mark=*, mark size=1pt, smooth] table[row sep=\\] {
0.10 0.69 \\ 0.20 0.63 \\ 0.30 0.60 \\ 0.40 0.58 \\ 0.50 0.56 \\ 0.60 0.545 \\ 0.70 0.53 \\ 0.80 0.517 \\ 0.90 0.503 \\ 1.00 0.49 \\
};
\addplot[orange!80!black, thick, mark=x, mark size=1.5pt, smooth] table[row sep=\\] {
0.10 0.25 \\ 0.20 0.247 \\ 0.30 0.244 \\ 0.40 0.241 \\ 0.50 0.238 \\ 0.60 0.236 \\ 0.70 0.234 \\ 0.80 0.233 \\ 0.90 0.231 \\ 1.00 0.230 \\
};
\addplot[green!50!black, thick, mark=triangle*, mark size=1.5pt, smooth] table[row sep=\\] {
0.10 0.486 \\ 0.20 0.486 \\ 0.30 0.486 \\ 0.40 0.486 \\ 0.50 0.486 \\ 0.60 0.486 \\ 0.70 0.486 \\ 0.80 0.486 \\ 0.90 0.486 \\ 1.00 0.486 \\
};

\nextgroupplot[
    title={PROTEINS},
    ylabel={},
    ymin=0.23, ymax=0.75
]
\addplot[blue!80!black, thick, mark=*, mark size=1pt, smooth] table[row sep=\\] {
0.10 0.715 \\ 0.20 0.645 \\ 0.30 0.615 \\ 0.40 0.59 \\ 0.50 0.57 \\ 0.60 0.555 \\ 0.70 0.54 \\ 0.80 0.53 \\ 0.90 0.518 \\ 1.00 0.50 \\
};
\addplot[orange!80!black, thick, mark=x, mark size=1.5pt, smooth] table[row sep=\\] {
0.10 0.26 \\ 0.20 0.257 \\ 0.30 0.254 \\ 0.40 0.252 \\ 0.50 0.25 \\ 0.60 0.248 \\ 0.70 0.246 \\ 0.80 0.245 \\ 0.90 0.244 \\ 1.00 0.243 \\
};
\addplot[green!50!black, thick, mark=triangle*, mark size=1.5pt, smooth] table[row sep=\\] {
0.10 0.50 \\ 0.20 0.50 \\ 0.30 0.50 \\ 0.40 0.50 \\ 0.50 0.50 \\ 0.60 0.50 \\ 0.70 0.50 \\ 0.80 0.50 \\ 0.90 0.50 \\ 1.00 0.50 \\
};

\nextgroupplot[
    title={REDDIT-BINARY},
    ylabel={},
    ymin=0.17, ymax=1.05
]
\addplot[blue!80!black, thick, mark=*, mark size=1pt, smooth] table[row sep=\\] {
0.10 1.00 \\ 0.20 1.00 \\ 0.30 1.00 \\ 0.40 1.00 \\ 0.50 0.995 \\ 0.60 0.985 \\ 0.70 0.965 \\ 0.80 0.935 \\ 0.90 0.90 \\ 1.00 0.86 \\
};
\addplot[orange!80!black, thick, mark=x, mark size=1.5pt, smooth] table[row sep=\\] {
0.10 0.19 \\ 0.20 0.188 \\ 0.30 0.187 \\ 0.40 0.186 \\ 0.50 0.185 \\ 0.60 0.184 \\ 0.70 0.184 \\ 0.80 0.183 \\ 0.90 0.183 \\ 1.00 0.183 \\
};
\addplot[green!50!black, thick, mark=triangle*, mark size=1.5pt, smooth] table[row sep=\\] {
0.10 0.50 \\ 0.20 0.50 \\ 0.30 0.50 \\ 0.40 0.50 \\ 0.50 0.50 \\ 0.60 0.50 \\ 0.70 0.50 \\ 0.80 0.50 \\ 0.90 0.50 \\ 1.00 0.50 \\
};

\end{groupplot}

\path (group c1r1.south west) -- node[below=1.5cm, draw=none] {
    \begin{minipage}{0.8\textwidth}
    \centering
    \begin{tikzpicture}
    \begin{axis}[
        hide axis,
        xmin=0, xmax=1,
        ymin=0, ymax=1,
        legend columns=3,
        legend style={draw=none, font=\footnotesize, column sep=1em}
    ]
    \addlegendimage{blue!80!black, thick, mark=*}
    \addlegendentry{Original}
    \addlegendimage{green!50!black, thick, mark=triangle*}
    \addlegendentry{Cayley graph}
    \addlegendimage{orange!80!black, thick, mark=x}
    \addlegendentry{Ramanujan $RN$ graph}
    \end{axis}
    \end{tikzpicture}
    \end{minipage}
} (group c3r1.south east);

\end{tikzpicture}
\caption{Mean resistance among top high-resistance edges across three datasets on 50 graphs. Ramanujan $RN$ graph achieves the lowest resistance, indicating a better global connectivity.}
\end{figure*}

\begin{corollary}
Let $G$ be a Ramanujan \(RN\) graph of degree $d$ on $N$ vertices . Then the maximum effective resistance $R_{\max}$ satisfies :
\[
 R_{\max} \;\le \frac{2}{\; d - 2\sqrt{d-1}} ,
\]

\noindent The proof is provided in Appendix \ref{RESGRAPH}.

\end{corollary}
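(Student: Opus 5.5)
The plan is to bound the effective resistance spectrally. I will write it through the pseudoinverse of the combinatorial Laplacian $\mathbf{L}=d\mathbf{I}-\mathbf{A}$, which for a $d$-regular graph equals $d\,\mathbf{L}_{\mathrm{norm}}$. The RN hypothesis is not needed for this step; the Ramanujan condition alone should suffice.

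For any pair $u,v$ we have $\mathbf{R_{eff}}(u,v)=(e_u-e_v)^\top \mathbf{L}^{+}(e_u-e_v)$. The vector $e_u-e_v$ is orthogonal to the all-ones vector, which is the kernel of $\mathbf{L}$ since $G$ is connected. On that orthogonal complement, $\mathbf{L}^{+}$ acts with eigenvalues $1/(d-\mu_i)$, where $\mu_2,\dots,\mu_N$ are the nontrivial adjacency eigenvalues. Therefore
\[
\mathbf{R_{eff}}(u,v)\;\le\;\frac{\|e_u-e_v\|_2^2}{d-\mu_2}\;=\;\frac{2}{d-\mu_2}.
\]

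Next I apply Definition 1. Every nontrivial eigenvalue satisfies $\mu_2\le 2\sqrt{d-1}$, so $d-\mu_2\ge d-2\sqrt{d-1}$. Note that $d-2\sqrt{d-1}=(\sqrt{d-1}-1)^2>0$ whenever $d\ge 3$. Substituting gives $\mathbf{R_{eff}}(u,v)\le 2/(d-2\sqrt{d-1})$ for every pair. Taking the maximum over all pairs yields the claimed bound on $R_{\max}$, which in particular holds for the edge resistances.

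One point needs care. Definition 1 bounds all eigenvalues by $2\sqrt{d-1}$, but the trivial eigenvalue $d$ must be excluded. I will read the definition as applying to the nontrivial spectrum, as is standard, and say so explicitly.

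The degenerate case $d=2$ must also be handled, since there the denominator vanishes. This case is excluded because Theorem~\ref{valued} forces $d\ge 4C^2(\log N)^{2/3}$, which exceeds $2$ for the relevant $N$. This is the only place the RN assumption enters, and it is the main, though mild, obstacle.
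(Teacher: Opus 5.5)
Your main argument is correct and is essentially the paper's proof. The paper cites the general bound $R_{\max}\le 2/\lambda_2$ and then uses $\lambda_2=d-\mu_2\ge d-2\sqrt{d-1}$. You instead derive $R_{\max}\le 2/\lambda_2$ yourself from the Rayleigh quotient of $\mathbf{L}^{+}$ on the complement of the constant vector. That derivation also makes two things explicit: $\lambda_2$ is the combinatorial gap $d-\mu_2$, not the normalized one from the Notations section; and only the nontrivial spectrum can be bounded by $2\sqrt{d-1}$. As you note, the RN hypothesis is not used, and the paper does not use it either.

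The one step that fails is your exclusion of $d=2$. Theorem~\ref{valued} gives a \emph{sufficient} condition for RN, not a necessary one, so an RN graph need not satisfy $d\ge 4C^{2}(\log N)^{2/3}$. Concretely, take the cycle $C_N$ with $N\ge 3$:
\begin{itemize}
\item it is $2$-regular with all adjacency eigenvalues in $[-2,2]$, hence Ramanujan under Definition 1;
\item adjacent vertices have $\mathbf{R_{eff}}=(N-1)/N$, so $p(v)=1/N>0$ and $C_N$ is RN.
\end{itemize}
So the hypotheses of the corollary do admit $d=2$, where the right-hand side is $2/0$. The correct treatment is simply to assume $d\ge 3$, or to read the bound as $+\infty$ so that it is vacuous; the paper leaves this implicit. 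You should also drop the claim that this is where the RN assumption enters, since RN plays no role at all.
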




In Figure 1, we plot the mean effective resistance on 50 graphs as a function of the top-$f$ fraction of edges for three datasets from TUDataset \cite{morris2020tudataset} (Enzymes, Proteins, and Reddit-Binary). As $f$ decreases and we focus on the edges with the highest resistance, i.e.\ those most prone to over‑squashing \cite{resit}, both our Ramanujan $RN$  and the Cayley graph (CGP) \cite{wilson2024cayley}  consistently achieve lower mean resistance than the original graphs. Moreover, the Ramanujan rewired graphs attain the lowest resistance overall, outperforming CGP on every dataset indicating improved propagation of information between distant nodes. 
Finally, the resistance curves for both Cayley and Ramanujan graphs are nearly flat, indicating a highly concentrated distribution of edge effective resistances, which is consistent with their regular degree.

\paragraph{\textbf{Mitigating Structural Overfitting}}

Ramanujan graphs are \(d\)-regular, so every node aggregates exactly \(d\) messages at each message passing step, thereby mitigating the aggregation bottleneck that affects graphs with highly variable degrees \cite{arnaiz2025oversmoothing}. In practice \cite{bechler-speicher2024graph} show that, GNNs tend to overfit the provided graph structure even when it carries no predictive signal. By contrast, the constant degree in a \(d\)-regular Ramanujan graph prevents this overfitting to local topological irregularities, resulting in stronger generalization and more stable, faster convergence under gradient‑descent \cite{bechler-speicher2024graph}.

Thus, a graph’s ability to propagate information effectively is determined by a balance between graph structural propagation potential and geometric coherence. Consequently, to perform efficiently in graph learning tasks with long-range dependencies, the value of \(d\) must be chosen carefully. Theorem \ref{valued} provides a size-dependent criterion based solely on the number of vertices \(N\), guaranteeing non-negative resistance curvature without any heuristic parameter tuning.



\subsection{Ramanujan Graph Rewiring with Non-Negative Resistance Curvature Algorithm}
\label{algogo}
Given an input graph \(G=(V,E)\) with \(N=|V|\), we set $d = \lceil 4(\log N)^{2/3} \rceil$, following Remark~\ref{remark1_}, so that the RN condition is satisfied. To preserve the local geometry of the input graph, we compute a positional representation \(z_v \in \mathbb{R}^p\) for each node \(v\in V\) using Laplacian or random-walk positional encodings~\cite{dwivedi2021graph,dwivedi2022long}, and define the pairwise distance matrix \(\Delta_{uv}=\|z_u-z_v\|_2^2\).

We then construct \(G^\star=(V,E^\star)\) over the same vertex set \(V\) using Friedman's permutation-cycle construction~\cite{friedman2003proof}, which provides a standard random \(d\)-regular graph model commonly used in the study of Ramanujan graphs. At each step, we sample \(P\) candidate permutation cycles and select the one whose induced edges connect the closest nodes in the positional space, as measured by \(\Delta_{uv}\). This preserves \(d\)-regularity by construction while encouraging edges between nodes that are already nearby in the original graph.

Finally, the preprocessing cost is dominated by the computation of \(\Delta\), and is substantially lower than that of curvature or resistance based rewiring methods. We detail this complexity in Section~\ref{Time} and provide ablations in Section~\ref{ablation}, showing that preserving positional proximity is crucial for downstream performance.

\section{Experiments}

\label{XXXP}
We empirically evaluate the effectiveness of our Ramanujan $RN$ graph based rewiring  on a diverse set of graph classification benchmarks. First, we consider six datasets from the TUDataset repository \cite{morris2020tudataset}  which is widely used to assess rewiring methods designed to mitigate over-squashing, since their graph structures in relation to the downstream task require the propagation of long‑range dependencies \cite{FOSR}. We also evaluate our approach on benchmarks that require extensive receptive fields and long range dependencies on two Long‑Range Graph Benchmark datasets : regression on Peptides‑Struct and  multi-label classification on Peptides‑Func \cite{dwivedi2022long}.
Since the degree induced by RNRP depends on graph size (Theorem~\ref{valued}), we report the dataset statistics in the supplementary material.

\subsection{Evaluation Protocol}

\paragraph{\textbf{Baselines}} We compare our method to nine state‑of‑the‑art rewiring techniques: the Fully‑Adjacent (FA) strategy, which appends a full adjacency matrix at the last layer \cite{alon2020bottleneck}; the diffusion‑based method DIGL\cite{DIGL} ; the curvature‑based methods SDRF\cite{UNDERSTANDING_bottlenecks}  and BORF \cite{BORF} ; the spectral rewiring method FoSR\cite{FOSR}  ; the resistance‑based approach GTR \cite{resit} ; LASER\cite{barbero2024localityaware} a Random Walk Rewiring method ;
PANDA,  that dynamically adjusts the model’s width based on the node structure \cite{choi2024panda} ; and two expander‑graph‑based rewiring techniques \cite{deac2022expander,wilson2024cayley}.

\paragraph{\textbf{TUDataset configuration}} To ensure a fair comparison, we follow exactly the experimental setup from \cite{wilson2024cayley}: training all models on an 80\%/10\%/10\% train/validation/test split, with early stopping after 100 epochs of no improvement in validation loss. We evaluate each approach using two common GNN architectures a four‑layer Graph Convolutional Network (GCN) \cite{gcn} and a four‑layer Graph Isomorphism Network (GIN) \cite{xu2018powerful}, each configured with 64 hidden units per layer, 0.5 dropout, and batch normalization after every layer. Optimization and learning rate schedules follow the original implementations \cite{wilson2024cayley}. Results in Table 1 report averages over 20 random seeds with 95\% confidence intervals, and all baseline scores are taken from \cite{wilson2024cayley}.

\paragraph{\textbf{LRGB configuration}}
For Peptides-struct and Peptides-func \cite{dwivedi2022long},
we replicate the experimental settings from \cite{barbero2024localityaware,choi2025fractalinspired}, we use  a 70\% /15\% /15\% train/validation/test split with a fixed 5-layer GCN Backbone \cite{gcn} and we choose the hidden dimension in order to respect the 500k parameter budget. 
The reported results table \ref{res_tab2} are averaged across 5 random seeds.
Some baselines are taken from \cite{choi2025fractalinspired} and we add the Delaunay Rewiring method \cite{attali2024delaunay}.

\paragraph{\textbf{Ramanujan RN Graph Configuration}}

Unlike most rewiring techniques, the Ramanujan \(RN\) approach introduces no extra hyperparameters for the rewiring preprocessing  (e.g., the number of edges to add or to delete). In our experiments, we apply  Ramanujan \(RN\) graph rewiring either at every layer or solely on the final layer, depending on the dataset. To preserve proximity, we embed nodes with an eight‑dimensional Laplacian encoding across all datasets \cite{dwivedi2022long}.

\subsection{Results}
\begin{table}[t]
  \centering
  \scriptsize
  \setlength{\tabcolsep}{3pt}
  \renewcommand{\arraystretch}{1.0}
  \resizebox{\columnwidth}{!}{%
  \begin{tabular}{lcccccc}
    \toprule
    \textbf{Model} & \textbf{REDDIT-B} & \textbf{IMDB-B} & \textbf{MUTAG} & \textbf{ENZYMES} & \textbf{PROTEINS} & \textbf{COLLAB} \\
    \midrule
    GCN       & $77.74_{\pm 1.59}$         & $60.50_{\pm 2.73}$     & $74.75_{\pm 4.03}$     & $29.08_{\pm 2.36}$     & $66.65_{\pm 1.93}$     & $70.49_{\pm 1.63}$     \\
    $+$ FA    & OOM                        & $48.95_{\pm 1.65}$     & $70.25_{\pm 4.61}$     & $28.67_{\pm 3.69}$     & $71.07_{\pm 1.51}$     & $72.04_{\pm 0.77}$     \\
    $+$ DIGL  & $77.35_{\pm 1.21}$         & $49.60_{\pm 2.44}$     & $70.50_{\pm 5.05}$     & $30.83_{\pm 1.54}$     & $72.72_{\pm 1.42}$     & $56.47_{\pm 0.87}$     \\
    $+$ SDRF  & $77.98_{\pm 1.48}$         & $59.00_{\pm 2.25}$     & $74.00_{\pm 3.46}$     & $26.67_{\pm 2.00}$     & $67.28_{\pm 2.17}$     & $71.33_{\pm 0.81}$     \\
    $+$ FoSR  & $77.75_{\pm 1.39}$         & $59.75_{\pm 2.36}$     & $75.25_{\pm 5.72}$     & $24.17_{\pm 3.01}$     & $70.85_{\pm 1.62}$     & $67.22_{\pm 1.37}$     \\
    $+$ BORF  & OOT                        & $48.90_{\pm 0.90}$     & $76.75_{\pm 0.04}$     & $27.83_{\pm 0.03}$     & $67.41_{\pm 0.02}$     & OOT                      \\
    $+$ GTR   & $79.03_{\pm 1.25}$         & \underline{$60.70_{\pm 2.08}$}     & $76.50_{\pm 4.19}$     & $25.33_{\pm 2.93}$     & $72.99_{\pm 1.96}$     & \underline{$72.60_{\pm 1.03}$}     \\
    $+$ PANDA & \underline{$87.28_{\pm 1.03}$} & $\mathbf{68.35_{\pm 2.35}}$ & $76.75_{\pm 5.53}$ & $30.67_{\pm 2.02}$     & $70.13_{\pm 1.52}$     & $\mathbf{73.85_{\pm 0.70}}$ \\
    $+$ EGP   & $67.55_{\pm 1.20}$         & $59.70_{\pm 2.37}$     & $70.50_{\pm 4.74}$     & $27.58_{\pm 3.26}$     & \underline{$73.30_{\pm 2.52}$} & $69.47_{\pm 0.97}$ \\
    $+$ CGP   & $67.05_{\pm 1.48}$         & $56.20_{\pm 1.83}$     & $\mathbf{83.75_{\pm 3.60}}$ & \underline{$31.00_{\pm 2.40}$} & $73.04_{\pm 1.29}$ & $69.63_{\pm 0.73}$ \\
    \midrule
    $+$ RNRP  & $\mathbf{90.33_{\pm 2.26}}$ & $54.50_{\pm 2.12}$ & \underline{$79.25_{\pm 2.95}$} & $\mathbf{43.33_{\pm 2.60}}$ & $\mathbf{74.03_{\pm 2.07}}$ & $68.60_{\pm 1.61}$ \\
    \midrule
    GIN       & $84.60_{\pm 1.45}$         & $71.25_{\pm 1.51}$     & $80.50_{\pm 5.14}$     & $35.67_{\pm 2.80}$     & $70.31_{\pm 1.75}$     & $71.49_{\pm 0.75}$     \\
    $+$ FA    & OOM                        & $69.90_{\pm 2.33}$     & $80.25_{\pm 5.31}$     & $47.83_{\pm 2.53}$     & $72.90_{\pm 1.42}$     & $72.74_{\pm 0.79}$     \\
    $+$ DIGL  & $84.58_{\pm 1.27}$         & $52.65_{\pm 2.15}$     & $78.50_{\pm 4.19}$     & $41.50_{\pm 3.06}$     & $72.32_{\pm 1.44}$     & $57.62_{\pm 1.01}$     \\
    $+$ SDRF  & $84.55_{\pm 1.40}$         & $69.55_{\pm 2.38}$     & $80.50_{\pm 4.18}$     & $37.17_{\pm 2.71}$     & $69.51_{\pm 1.71}$     & $72.96_{\pm 0.42}$     \\
    $+$ FoSR  & $85.75_{\pm 1.10}$         & $69.25_{\pm 1.81}$     & $80.50_{\pm 4.74}$     & $28.08_{\pm 2.30}$     & $71.52_{\pm 1.77}$     & $71.72_{\pm 0.89}$     \\
    $+$ BORF  & OOT                        & $70.70_{\pm 0.02}$     & $79.25_{\pm 0.04}$     & $34.17_{\pm 0.03}$     & $70.62_{\pm 0.02}$     & OOT                      \\
    $+$ GTR   & $85.47_{\pm 0.83}$         & $69.55_{\pm 1.47}$     & $79.00_{\pm 3.85}$     & $31.75_{\pm 2.47}$     & $72.05_{\pm 1.51}$     & $71.85_{\pm 0.71}$ \\
    $+$ PANDA & \underline{$90.33_{\pm 0.87}$} & $68.35_{\pm 2.35}$ & $83.25_{\pm 3.26}$ & $42.17_{\pm 2.29}$ & $72.32_{\pm 1.79}$     & $73.32_{\pm 0.81}$     \\
    $+$ EGP   & $77.88_{\pm 1.56}$         & $68.25_{\pm 1.12}$     & $81.50_{\pm 4.70}$     & $40.67_{\pm 3.10}$     & $70.85_{\pm 1.57}$     & $72.33_{\pm 0.95}$     \\
    $+$ CGP   & $78.23_{\pm 1.27}$         & \underline{$71.65_{\pm 1.53}$} & \underline{$85.25_{\pm 3.20}$} & $\mathbf{50.08_{\pm 2.24}}$ & \underline{$73.08_{\pm 1.40}$} & \underline{$73.35_{\pm 0.79}$} \\
    \midrule
    $+$ RNRP  & $\mathbf{92.42_{\pm 0.88}}$ & $\mathbf{72.34_{\pm 1.49}}$ & $\mathbf{89.50_{\pm 2.95}}$ & \underline{$48.95_{\pm 2.99}$} & $\mathbf{75.34_{\pm 1.77}}$ & $\mathbf{75.17_{\pm 0.76}}$ \\
    \bottomrule
  \end{tabular}%
  }
  \caption{Comparison of Ramanujan with Non-negative Resistance-curvature Propagation (RNRP) against graph rewiring techniques using \emph{Graph Convolutional Network} (GCN) and \emph{Graph Isomorphism Network} (GIN) backbones on the TU Dataset benchmark. Performance is measured by accuracy. Best scores are shown in \textbf{bold} and second-best scores are \underline{underlined}. OOM: out of memory; OOT: out of time.}
  \label{res_tab}
\end{table}

\begin{table}[ht]
  \centering
  \label{tab:rewiring_methods}
  \begin{tabular}{lcc}
    \toprule
   \textbf{Method} & \textbf{Peptides-func} (AP $\uparrow$) & \textbf{Peptides-struct} (MAE $\downarrow$) \\
    \midrule
    GCN               & $59.30\pm0.23$ & $34.96\pm0.13$ \\
    + FoSR            & $59.47\pm0.35$ & $34.73\pm0.07$ \\
    + GTR             & $50.75\pm0.29$ & $36.18\pm0.10$ \\
    + SDRF            & $59.47\pm0.13$ & $34.78\pm0.13$ \\
    + BORF            & $59.94\pm0.37$ & $35.14\pm0.09$ \\
    + PANDA           & $60.28\pm0.31$ & $32.72\pm0.01$ \\
    + DR           & $61.14\pm0.26$ & $31.87\pm0.11$ \\
    + LASER           & \underline{$64.40\pm0.10$} & \underline{$30.43\pm0.19$} \\
    \hline
    + RNRP              & $\mathbf{66.26 \pm0.27}$ & $\mathbf{28.79\pm0.14}$ \\
    \bottomrule
  \end{tabular}
    \caption{Comparison of RNRP with state of the art rewiring methods on LRGB, performance metrics are Average Precision for Peptides-func and Mean Absolute Error for Peptides-struct. Best scores are shown in \textbf{bold} and second‑best scores are \underline{underlined}.}
    \label{res_tab2}
\end{table}

Table \ref{res_tab} demonstrates that Ramanujan with Nonnegative Resistance curvature Propagation (RNRP) consistently outperforms state‑of‑the‑art rewiring techniques. On average, integrating RNRP with either a GCN \cite{gcn} or GIN \cite{xu2018powerful} backbone yields an accuracy improvement of nearly 12\% compared to the backbone GNN. In particular, when using GIN, RNRP achieves the best results on every dataset except Enzymes, where it still ranks second. 
Although expander graphs \cite{deac2022expander,wilson2024cayley} are the second most effective baselines, our Ramanujan rewiring approach mitigates over-squashing by guaranteeing nonnegative resistance curvature, all while faithfully preserving the graph’s local structure.
\newline 

Table \ref{res_tab2} presents the performance of RNRP on long‑range benchmarks. For the peptides benchmarks, our method consistently improves upon the GCN backbone by an average of 14\% across both tasks and outperforms recent rewiring techniques, thereby demonstrating its superior ability to capture long‑range molecular dependencies.



\subsection{Time Comparison}

\label{Time}

Both predictive accuracy and computational preprocessing cost are key criteria for evaluating graph rewiring methods. In fact, curvature‑based approaches such as those in \cite{UNDERSTANDING_bottlenecks} and \cite{BORF} exhibit quadratic and cubic time complexity in the number of edges, respectively. Graph‑rewiring methods based on effective resistance, such as GTR \cite{resit}, require cubic computational cost in the number of nodes. This high complexity can make preprocessing excessively time‑consuming for large graphs.

\begin{table*}[ht]
  \centering
  \begin{tabular}{lrrrrrr}
    \toprule
    Model   & \textbf{REDDIT-B} & \textbf{IMDB-B} &\textbf{MUTAG}      & \textbf{ENZYMES}    & \textbf{PROTEINS}  & \textbf{  COLLAB}    \\
    \midrule
    DIGL    & 40.3837       & 0.411771    & 0.0342833  & 0.243485   & 0.491458   & 56.3175   \\
    SDRF    & 359.128       & 5.13257     & 0.669701   & 1.71482    & 3.02873    & 619.125   \\
    FoSR    & 74.8568       & 4.54634     & 4.71567    & 4.56855    & 5.04358    & 9.79994   \\
    BORF    & OOT           & 465.408     & 53.7069    & 179.573    & 351.173    & OOT       \\
    GTR     & 118.549       & 3.39839     & 1.54127    & 2.87399    & 6.49714    & 92.6125   \\
    PANDA   & 6.13925       & 0.789759    & 0.246243   & 0.278594   & 0.248043   & 230.850   \\
    EGP     & 0.245215      & 0.0185697   & 0.00446963 & 0.0163198  & 0.0393348  & 0.129567  \\
    CGP     & 0.226065      & 0.0211341   & 0.00438905 & 0.0166841  & 0.0348585  & 0.131188  \\
    \hline
       \textbf{RNRP}    & 1.730475      & 0.1450883  & 0.11660302 & 0.1385242 & 0.1793965  & 1.548660  \\
    \bottomrule
  \end{tabular}
\caption{Comparison of the preprocessing time to construct each graph rewiring method (in seconds per graph). OOT indicates out-of-time.}  
  \label{tab:preprocessing_runtime}
\end{table*}

By contrast, RNRP preprocessing runs in $\mathcal{O}(N^2)$ time, making it over 65× faster than GTR \cite{resit} and over 200× faster than SDRF \cite{UNDERSTANDING_bottlenecks} on the Reddit‑Binary dataset (see Table~\ref{tab:preprocessing_runtime} for time comparison). In practice, this scalability makes RNRP well suited to large, real-world graphs, where more computationally expensive rewiring methods become impractical at scale. Although expander-graph constructions (EGP and CGP) \cite{deac2022expander,wilson2024cayley} are faster to build, RNRP remains the second-fastest method, offering a reasonable balance between preprocessing time and predictive performance. Preprocessing times for all baselines are taken directly from \cite{wilson2024cayley}, and we follow the same evaluation protocol to ensure a fair comparison.

\begin{figure}[t]
\centering

\begin{minipage}[b]{0.49\linewidth}
  \centering
  \begin{tikzpicture}
    \begin{axis}[
      width=0.82\linewidth,
      height=0.62\linewidth,
      bar width=0.055\linewidth,
      xbar,
      y=0.58cm,
      enlargelimits=0.25,
      yticklabels={
        {\scriptsize Original},
        {\scriptsize Random},
        {\scriptsize RNRP}
      },
      ytick=data,
      title={\textbf{\scriptsize MUTAG}},
      tick label style={font=\scriptsize},
      label style={font=\scriptsize},
      title style={font=\scriptsize},
      xtick style={draw=none}
    ]
      \addplot[pattern=north east lines] coordinates {
        (80.50,0) (87.50,1) (89.50,2)
      };
    \end{axis}
  \end{tikzpicture}
\end{minipage}\hfill
\begin{minipage}[b]{0.49\linewidth}
  \centering
  \begin{tikzpicture}
    \begin{axis}[
      width=0.82\linewidth,
      height=0.65\linewidth,
      bar width=0.055\linewidth,
      xbar,
      y=0.58cm,
      enlargelimits=0.25,
      yticklabels={},
      ytick=data,
      title={\textbf{\scriptsize PROTEINS}},
      tick label style={font=\scriptsize},
      label style={font=\scriptsize},
      title style={font=\scriptsize},
      xtick style={draw=none}
    ]
      \addplot[pattern=north east lines] coordinates {
        (70.31,0) (72.62,1) (75.24,2)
      };
    \end{axis}
  \end{tikzpicture}
\end{minipage}

\vspace{0.5em}

\begin{minipage}[b]{0.49\linewidth}
  \centering
  \begin{tikzpicture}
    \begin{axis}[
      width=0.82\linewidth,
      height=0.65\linewidth,
      bar width=0.055\linewidth,
      xbar,
      y=0.58cm,
      enlargelimits=0.25,
      yticklabels={
        {\scriptsize Original},
        {\scriptsize Random},
        {\scriptsize RNRP}
      },
      ytick=data,
      title={\textbf{\scriptsize ENZYMES}},
      tick label style={font=\scriptsize},
      label style={font=\scriptsize},
      title style={font=\scriptsize},
      xtick style={draw=none}
    ]
      \addplot[pattern=north east lines] coordinates {
        (36.32,0) (47.56,1) (48.95,2)
      };
    \end{axis}
  \end{tikzpicture}
\end{minipage}\hfill
\begin{minipage}[b]{0.49\linewidth}
  \centering
  \begin{tikzpicture}
    \begin{axis}[
      width=0.82\linewidth,
      height=0.65\linewidth,
      bar width=0.055\linewidth,
      xbar,
      y=0.58cm,
      enlargelimits=0.25,
      yticklabels={},
      ytick=data,
      title={\textbf{\scriptsize IMDB-BINARY}},
      tick label style={font=\scriptsize},
      label style={font=\scriptsize},
      title style={font=\scriptsize},
      xtick style={draw=none}
    ]
      \addplot[pattern=north east lines] coordinates {
        (71.25,0) (71.62,1) (72.34,2)
      };
    \end{axis}
  \end{tikzpicture}
\end{minipage}

\caption{Ablation study of classification accuracy on the original graph, a randomly generated Ramanujan graph, and our \textbf{RNRP} method.}
\label{fig:three_methods_comparison}
\end{figure}

\subsection{Ablation Studies}
\label{ablation}

In this part we conduct ablation experiments to empirically evaluate (i)  the impact of preserving original node proximity during Ramanujan-based rewiring and (ii) assess robustness to oversmoothing.

\paragraph{\textbf{Effect of preserving proximity.}}
We compare on MUTAG, PROTEINS, ENZYMES, and 
IMDB-BINARY datasets: (a) the original input graph, (b) a $d$-regular random
Ramanujan $RN$ graph (same degree, no proximity guidance), and (c) our proximity-aware RNRP. All models use the same training protocol as described in Section ~\ref{XXXP} with a GIN backbone. As reported in Figure~\ref{fig:three_methods_comparison}, the random Ramanujan RN graph already improves over the original topology, indicating that expansion and uniform degree help long-range propagation. Yet, RNRP consistently attains the highest accuracy across datasets, showing that combining Ramanujan RN expansion with preservation of original structure proximity yields the strongest gains, naive expander insertion, while beneficial, is insufficient to match RNRP.

\paragraph{\textbf{Robustness to oversmoothing.}}

We quantify oversmoothing via the node wise Dirichlet energy defined in equation \ref{diri} and track its evolution during training for a backbone GCN on the original graphs and the same GCN trained on the RNRP graphs. On ENZYMES, MUTAG, and PROTEINS, Figure \ref{oversm} shows that the Dirichlet energy remains systematically higher with RNRP than the original graph, indicating slower contraction of intra-neighborhood variance. This is consistent with our rewiring approach : the $d$-regular Ramanujan structure under the $RN$ constraint supplies many low-resistance alternative routes (improving global reach) while equalizing per-node aggregation (reducing hub-driven averaging), thus mitigating oversmoothing without sacrificing long-range propagation.

\begin{figure}[t]
  \centering
  \begin{minipage}[b]{0.33\textwidth}
    \centering
    \includegraphics[width=\linewidth]{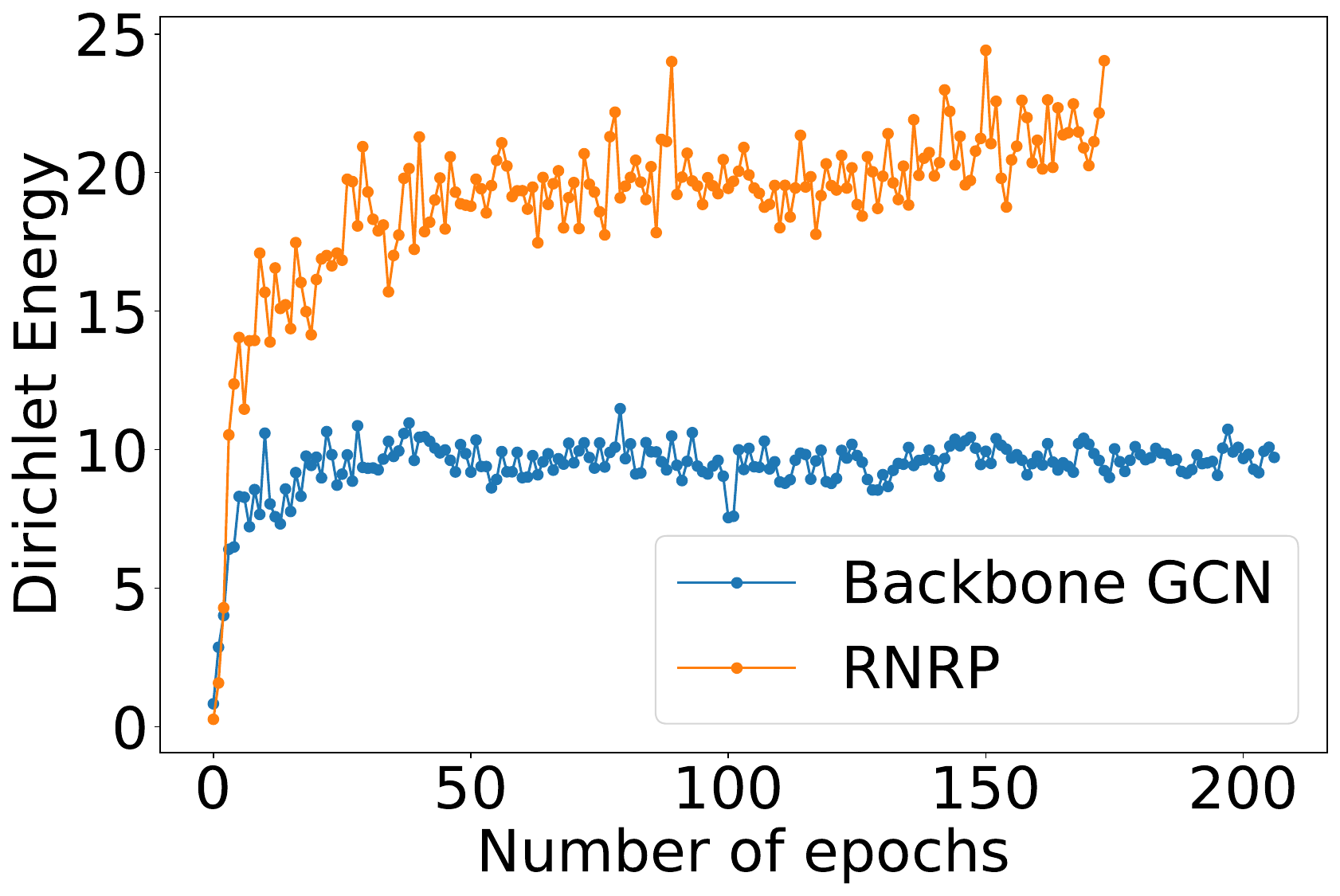}\\
    \small (a) ENZYMES
  \end{minipage}\hfill
  \begin{minipage}[b]{0.33\textwidth}
    \centering
    \includegraphics[width=\linewidth]{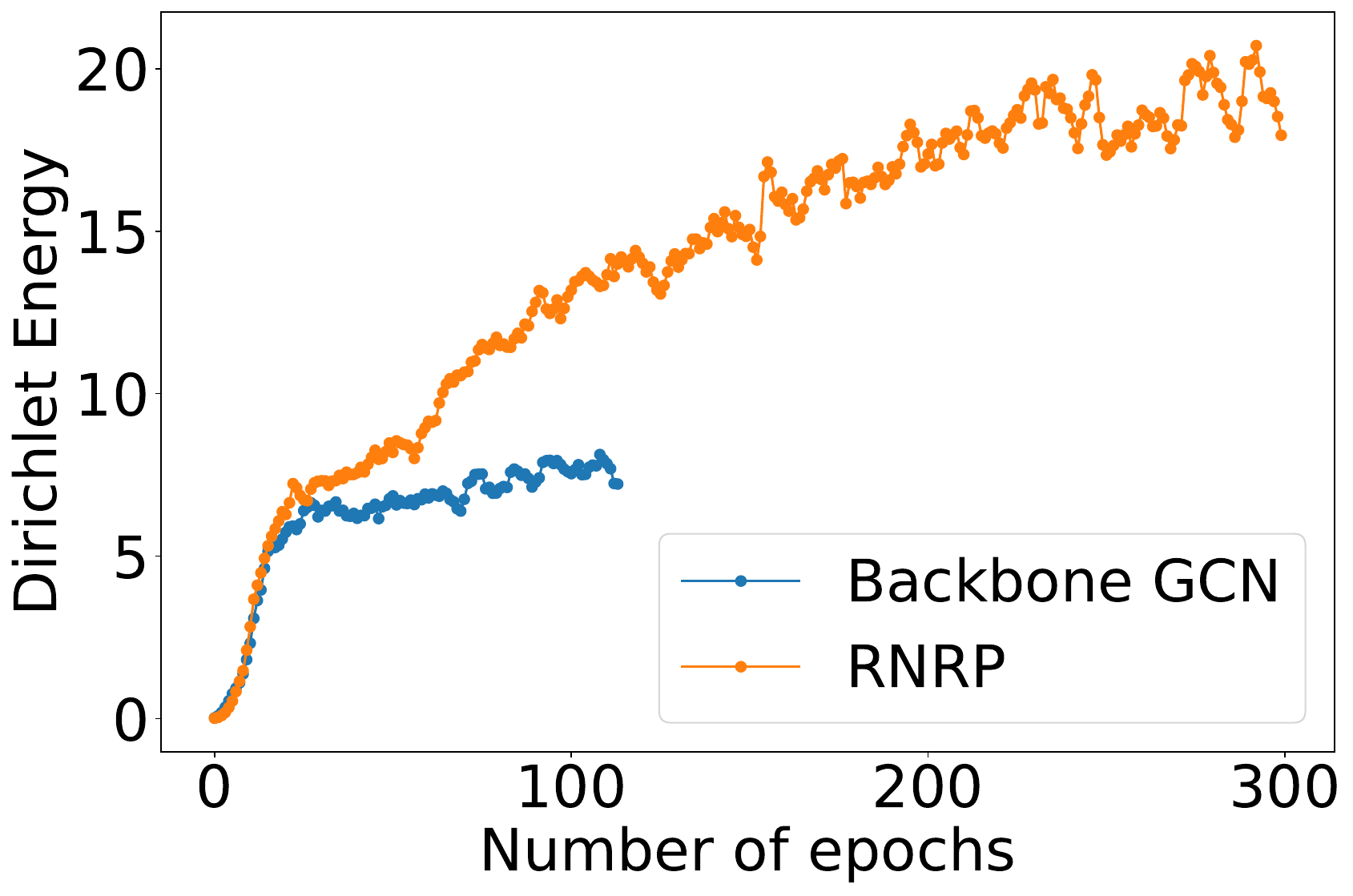}\\
    \small (b) MUTAG
  \end{minipage}\hfill
  \begin{minipage}[b]{0.33\textwidth}
    \centering
    \includegraphics[width=\linewidth]{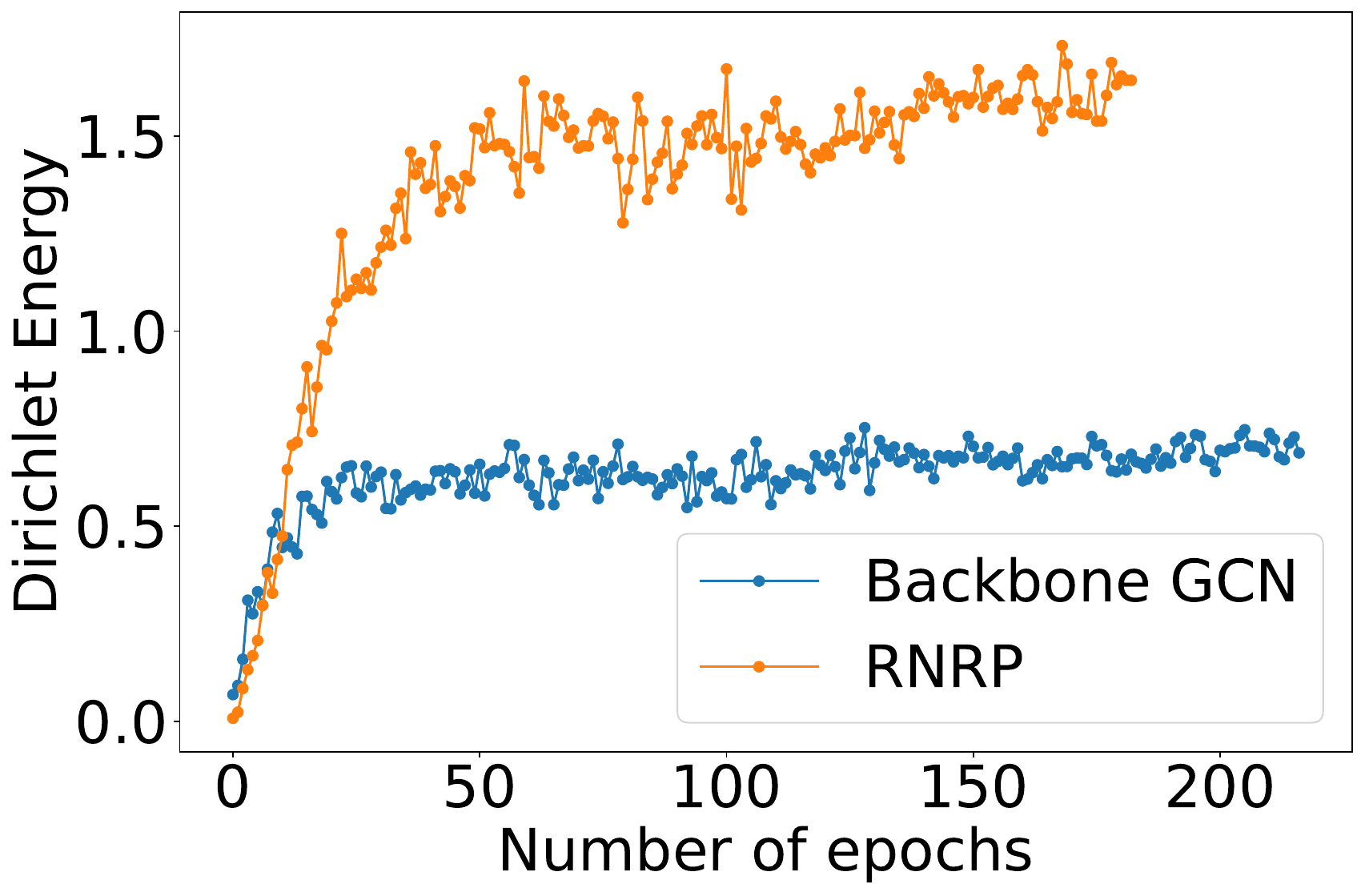}\\
    \small (c) PROTEINS
  \end{minipage}

  \caption{Comparison of Dirichlet energy over training epochs for a GCN on the original graph versus the \textbf{RNRP}-rewired graph. Higher values reflect weaker homogenization; \textbf{RNRP} consistently reaches larger energy, reduced over-smoothing relative to the original structure.}
  \label{oversm}
\end{figure}

\section{Conclusion}

In this paper, we propose Ramanujan with Nonnegative Resistance curvature Propagation (RNRP), a graph‑rewiring method that integrates the strong spectral properties of Ramanujan graphs with the Non-negative resistance curvature class graph. We provide theoretical guarantees showing that this new graph alleviates information over-squashing and remains robust to oversmoothing while preserving essential topological properties such as low graph diameter and low effective resistance. Empirical results demonstrate that RNRP achieves state‑of‑the‑art results on tasks requiring long‑range dependencies, outperforming established graph‑rewiring methods while maintaining a competitive preprocessing cost.

\section{Declaration of Generative AI Use}

This paper includes text that was revised and refined using generative AI tools (ChatGPT) to improve clarity and correct potential grammatical errors.


%
%
%
\bibliographystyle{splncs04}
\bibliography{ref}

\appendix

\section{Appendix: Proofs of Sections 4 and 5}

\setcounter{theorem}{0}
\setcounter{corollary}{0}

\subsection{Proof of Theorem 1: Commute Time Bound via Resistance Curvature}
\label{CTCURV}

\begin{theorem}[Commute time bound via resistance curvature]
Let \(G=(V,E)\) be a finite, connected, undirected, and unweighted graph.
For any \(u,v\in V\), let
\[
u=x_0\sim x_1\sim \cdots \sim x_k=v
\]
be a shortest path, so that \(k=d_G(u,v)\). Define
\[
p_\star:=\min_{0\le i\le k} p(x_i),
\qquad\text{where}\qquad
p(x)=1-\frac12\sum_{y\in N(x)}\Reff(x,y).
\]
Then
\[
\CT(u,v)\le 4\,|E|\,k\,\bigl(1-p_\star\bigr).
\]
\end{theorem}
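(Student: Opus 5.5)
The plan is to pass from commute time to effective resistance and then bound the resistance along the shortest path edge by edge, using the node-level curvature to control each edge. The first ingredient is the classical identity of Chandra et al.~\cite{chandra1989electrical}: for a connected, unweighted, undirected graph,
\[
\CT(u,v)=2\,|E|\,\Reff(u,v).
\]
This reduces the claim to showing $\Reff(u,v)\le 2k\,(1-p_\star)$. The case $k=0$ is trivial, since both sides vanish; we assume $k\ge 1$ from now on.

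The second step uses that effective resistance is a metric on $V$. Applying the triangle inequality along the shortest path $x_0\sim\cdots\sim x_k$ gives
\[
\Reff(u,v)\le\sum_{i=0}^{k-1}\Reff(x_i,x_{i+1}).
\]
Each summand is the resistance across an actual edge, so it remains to bound a single edge term by $2(1-p_\star)$.

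For this, fix $i$ and note that $x_{i+1}\in N(x_i)$. All effective resistances are nonnegative, so the single term is at most the full neighbourhood sum at $x_i$. By the definition of $p$, that sum is exactly $2(1-p(x_i))$:
\[
\Reff(x_i,x_{i+1})\le\sum_{y\in N(x_i)}\Reff(x_i,y)=2\bigl(1-p(x_i)\bigr)\le 2\bigl(1-p_\star\bigr).
\]
The last inequality holds because $x_i$ lies on the path, so $p(x_i)\ge p_\star$. Summing over the $k$ edges gives $\Reff(u,v)\le 2k(1-p_\star)$. Multiplying by $2|E|$ then yields $\CT(u,v)\le 4|E|k(1-p_\star)$.

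No step here is a serious obstacle. The only points needing care are quoting the commute-time identity in the unweighted form, with $2|E|$ equal to the total degree, and justifying the triangle inequality for $\Reff$. That inequality is standard and can be cited, or derived from the variational characterization of $\Reff$ or from $\Reff(a,b)=(e_a-e_b)^\top L^\dagger(e_a-e_b)$. It is also worth remarking that the bound is somewhat lossy: discarding all neighbours of $x_i$ other than $x_{i+1}$ wastes the remaining terms. One could instead split each edge resistance evenly between its two endpoints to improve constants, but this is not needed for the stated inequality.
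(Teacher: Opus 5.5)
Your proof is correct and follows the paper's argument step for step: the Chandra et al.\ identity $\CT(u,v)=2|E|\,\Reff(u,v)$, the triangle inequality for $\Reff$ along the shortest path, and the per-edge bound $\Reff(x_i,x_{i+1})\le 2\bigl(1-p(x_i)\bigr)\le 2(1-p_\star)$, obtained by discarding the other nonnegative neighbour terms. Your explicit treatment of the trivial case $k=0$ is a small addition that the paper omits.
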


\begin{proof}
For the simple random walk on \(G\), the commute time satisfies
\begin{equation}\label{eq:ct_reff_appendix}
\CT(u,v)=2|E|\,\Reff(u,v),
\end{equation}
see \cite{chandra1989electrical}. Moreover, \(\Reff\) defines a metric on \(V\), 
and therefore satisfies the triangle inequality. Applying this inequality 
along the shortest path
\[
u=x_0\sim x_1\sim \cdots \sim x_k=v
\]
gives
\begin{equation}\label{eq:triangle_reff_appendix}
\Reff(u,v)\le \sum_{i=0}^{k-1}\Reff(x_i,x_{i+1}).
\end{equation}

We now bound each term \(\Reff(x_i,x_{i+1})\) through the resistance curvature. 
By definition,
\[
1-p(x)=\frac12\sum_{y\in N(x)}\Reff(x,y).
\]
Since all terms in the sum are nonnegative, for every neighbor \(y\in N(x)\) we have
\begin{equation}\label{eq:neighbor_bound_appendix}
\Reff(x,y)\le \sum_{z\in N(x)}\Reff(x,z)=2\bigl(1-p(x)\bigr).
\end{equation}
In particular, for each edge \((x_i,x_{i+1})\) on the path,
\[
\Reff(x_i,x_{i+1})\le 2\bigl(1-p(x_i)\bigr).
\]

By definition of
\[
p_\star=\min_{0\le i\le k}p(x_i),
\]
we have \(p(x_i)\ge p_\star\) for all \(i\), and hence
\[
1-p(x_i)\le 1-p_\star.
\]
Therefore,
\[
\Reff(x_i,x_{i+1})\le 2(1-p_\star),
\qquad\text{for all } i=0,\dots,k-1.
\]
Combining this estimate with \eqref{eq:triangle_reff_appendix}, we obtain
\[
\Reff(u,v)
\le \sum_{i=0}^{k-1}\Reff(x_i,x_{i+1})
\le \sum_{i=0}^{k-1}2(1-p_\star)
=2k(1-p_\star).
\]
Substituting this into \eqref{eq:ct_reff_appendix} yields
\[
\CT(u,v)
=2|E|\,\Reff(u,v)
\le 2|E|\cdot 2k(1-p_\star)
=4|E|k(1-p_\star),
\]
which concludes the proof.
\end{proof}

\subsection{Proof of Theorem 2: Ramanujan Graphs Have Non-Negative Resistance Curvature}
\label{RNGRAPH}

\begin{theorem}
\label{thm:valued_appendix}
Let \(G\) be a Ramanujan graph on \(N\) vertices of degree \(d\), 
and let \(C>0\) be the absolute constant from \cite{glock2024hamilton}. 
If
\[
d \ge 4C^2(\log N)^{2/3},
\]
then \(G\) has non-negative resistance curvature.
\end{theorem}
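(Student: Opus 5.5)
The plan is to reduce the statement to Hamiltonicity, using the expander Hamiltonicity theorem of \cite{glock2024hamilton}, and then to pass from Hamiltonicity to non-negative resistance curvature through the spanning-tree description of $p(v)$ from \cite{devriendt2024graphs,devriendt2024graph}. The first step is a purely spectral translation. I recall the result of \cite{glock2024hamilton} as saying that every $(N,d,\lambda)$-graph with $\lambda \le d/\bigl(C(\log N)^{1/3}\bigr)$ is Hamiltonian, where $\lambda$ is the largest nontrivial adjacency eigenvalue in absolute value. For a Ramanujan graph (Definition~1) we have $\lambda \le 2\sqrt{d-1} < 2\sqrt d$. The hypothesis $d \ge 4C^2(\log N)^{2/3}$ is equivalent to $\sqrt d \ge 2C(\log N)^{1/3}$, which rearranges to $2\sqrt d \le d/\bigl(C(\log N)^{1/3}\bigr)$. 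This explains the exact shape of the constant $4C^2$ and the exponent $2/3$, and it gives that $G$ is Hamiltonian.

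The second step expresses the curvature probabilistically. Foster-type arguments show that $\Reff(x,y)$ for an edge $xy$ equals the probability that $xy$ lies in a uniformly random spanning tree $T$. Hence
\[
p(v)=1-\tfrac12\,\mathbb E\bigl[\deg_T(v)\bigr],
\]
so $p(v)\ge 0$ is equivalent to $\mathbb E[\deg_T(v)]\le 2$ for every $v$. By Foster's theorem $\sum_v \mathbb E[\deg_T(v)] = 2(N-1)$, so the average is already below $2$. The task is to show that no single vertex exceeds $2$.

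The third step, which I expect to be the main obstacle, is to deduce this pointwise bound from the structure supplied by the first step together with regularity. My first attempt would be to invoke the sufficient condition for membership in $RN$ established in \cite{devriendt2024graphs}. I recall that paper giving structural criteria, for example vertex-transitivity or the presence of suitable spanning substructures such as a Hamiltonian cycle combined with regularity, under which $\mathbb E[\deg_T(v)]\le 2$ holds at every vertex. I would check that the Hamiltonian cycle from step one, together with $d$-regularity, meets the hypotheses of that criterion.

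If no such criterion applies directly, I would fall back on a spectral estimate. I would expand
\[
\Reff(u,v)=\tfrac1d\sum_{k\ge0}(e_u-e_v)^{\top}(A/d)^k(e_u-e_v)
\]
on $\mathbf 1^{\perp}$ and keep the $k=0,1,2$ terms exactly, using $(A^2)_{uu}=d$ and the common-neighbour counts. The tail would be bounded by $\rho^k$ with $\rho=2\sqrt{d-1}/d$. I expect this naive bound to give $\sum_{u\sim v}\Reff(v,u)=2+O(d^{-1/2})$ rather than $\le 2$. In that case the error terms would have to be absorbed by exploiting cancellation between the even and odd terms, possibly using the Hamiltonian cycle to control the local structure. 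This cancellation is exactly where I expect the argument to be delicate.
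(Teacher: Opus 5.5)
\paragraph{Verdict: genuine gap at Step 3.}
Your Step 1 is exactly the paper's first step. The bound $\lambda\le 2\sqrt{d-1}<2\sqrt d$ together with the criterion of \cite{glock2024hamilton} gives Hamiltonicity, and this is where $4C^2$ and $(\log N)^{2/3}$ come from. The paper then finishes in one line by citing \cite{devriendt2024graph,devriendt2024graphs} for ``Hamiltonian graphs have non-negative resistance curvature'', without using regularity. Your proposal leaves this step open, and the gap is real. Neither of your two routes reaches the quantity you correctly isolated in Step 2, namely $\mathbb{E}[\deg_T(v)]\le 2$ for the \emph{uniform} spanning tree $T$.

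\paragraph{Why the cited criterion does not apply.}
As I understand \cite{devriendt2024graphs}, the class it studies (which contains all Hamiltonian graphs) is defined existentially. A graph belongs to it if \emph{some} probability distribution on spanning trees, equivalently some choice of positive edge weights, gives expected degree at most $2$ at every vertex. Hamiltonicity yields this trivially: take the uniform distribution over the $N$ Hamiltonian paths obtained by deleting one edge of a Hamiltonian cycle, which gives expected degree $2-2/N$ everywhere. This says nothing about the uniform spanning tree. The wheel with hub $h$ and rim $C_5$ is Hamiltonian, yet each spoke has $\Reff=5/11$. Hence $\sum_{u\sim h}\Reff(h,u)=25/11>2$ and $p(h)<0$. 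So checking the hypotheses of that criterion would only show that $G$ \emph{admits} a metric with non-negative curvature. It would not show $p(v)\ge 0$ for the unit weights used in the paper's definition of $RN$ and in its remark on $d=\lceil 4(\log N)^{2/3}\rceil$. The paper's own proof has the same gap at this point.

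\paragraph{Why the spectral fallback fails at this degree.}
For $d$-regular $G$, using $A=dI-L$ and $LL^{+}=I-\frac1N\mathbf 1\mathbf 1^{\top}$, one gets the exact identity
\[
\sum_{u\sim v}\Reff(v,u)=2-\frac2N+\sum_{u\sim v}\bigl(L^{+}_{uu}-L^{+}_{vv}\bigr).
\]
So $p(v)\ge 0$ if and only if $\sum_{u\sim v}(L^{+}_{uu}-L^{+}_{vv})\le 2/N$.

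Now write $L^{+}_{vv}=\sum_{i\ge 2}\phi_i(v)^2/(d-\mu_i)$ and expand $(d-\mu)^{-1}$ to second order in $\mu/d$ with exact remainder. The first three terms are identical at every vertex, since they involve only $A_{vv}=0$ and $(A^2)_{vv}=d$. The remainder gives $|L^{+}_{uu}-L^{+}_{vv}|\le 2\lambda^3/\bigl(d^3(d-\lambda)\bigr)=O(d^{-5/2})$. Hence the correction is $O(d^{-3/2})$, which is better than your $O(d^{-1/2})$.

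This is still far too large. With $d\asymp(\log N)^{2/3}$ it is of order $1/\log N$, while you need $2/N$; the bound only suffices once $d\gtrsim N^{2/3}$. Any cancellation at scale $1/N$ would have to come from vertex-level structure, which neither the eigenvalue bound nor Hamiltonicity controls. For example, triangle counts at $v$ versus at its neighbours enter at the next order.

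The one case where the conclusion is automatic is vertex-transitive $G$, such as Cayley-graph Ramanujan constructions. There $L^{+}_{vv}$ is constant and $p(v)=1/N$ for every $d$. Without an extra hypothesis of this kind, the Glock-plus-Hamiltonicity route only proves the weaker, existential statement.
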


\begin{proof}
By the Ramanujan property,
\[
\mu(G)\le 2\sqrt{d-1}<2\sqrt d,
\]
and therefore
\[
\frac{d}{\mu(G)}>\frac{\sqrt d}{2}.
\]
By \cite{glock2024hamilton}, any \((N,d,\mu)\)-graph satisfying
\[
\frac{d}{\mu(G)}\ge C(\log N)^{1/3}
\]
is Hamiltonian. Moreover, Hamiltonian graphs have non-negative resistance 
curvature by \cite{devriendt2024graph,devriendt2024graphs}. Hence, it is sufficient to require
\[
\frac{\sqrt d}{2}\ge C(\log N)^{1/3},
\]
which is equivalent to
\[
d\ge 4C^2(\log N)^{2/3}.
\]
This proves the claim.
\end{proof}

\subsection{Proof of Corollary 1: Low Diameter}
\label{diamgraph}

\begin{corollary}
\label{cor:diam_appendix}
Let \(G\) be a finite, connected Ramanujan graph on \(N\) vertices of degree \(d\) 
with non-negative resistance curvature. Then
\[
\operatorname{diam}(G)\le 1+\frac{3\,\log N}{\log(\log N)}.
\]
\end{corollary}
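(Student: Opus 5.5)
We need $\operatorname{diam}(G)\le 1+3\log N/\log\log N$. The diameter bound for RN graphs quoted in the related-work section, $\lceil\sqrt{\Delta/K}\log N\rceil$, is of no use here: it degenerates when $K=0$. We therefore argue spectrally, combining the Ramanujan condition with the degree lower bound. The relevant degree is the one used in practice, where the RN property is guaranteed by Theorem~\ref{valued} or Remark~\ref{remark1_}, namely $d\ge 4(\log N)^{2/3}$ (with constant $C\ge1$ in the theorem's version).

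\textbf{Step 1: spectral diameter bound.} Let $\mu=\max_{i\ge2}|\mu_i|\le 2\sqrt{d-1}$ be the largest nontrivial adjacency eigenvalue in absolute value. Take vertices $x,y$ and write $\mathbf{1}_x,\mathbf{1}_y$ for their indicator vectors. Expanding in an orthonormal eigenbasis gives
\[
(A^t)_{xy}\;\ge\; \frac{d^t}{N}-\mu^t\,\|\mathbf{1}_x\|\,\|\mathbf{1}_y\| \;=\; \frac{d^t}{N}-\mu^t .
\]
Hence $(A^t)_{xy}>0$ as soon as $(d/\mu)^t>N$, and so
\[
\operatorname{diam}(G)\le 1+\frac{\log N}{\log(d/\mu)} .
\]
This is the classical Chung bound. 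One may sharpen it by using $(N-1)$ in place of $N$, but that is not needed.

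\textbf{Step 2: lower bound on $\log(d/\mu)$.} By the Ramanujan property, as in the proof of Theorem~\ref{valued}, $d/\mu>\sqrt d/2$. With $d\ge 4(\log N)^{2/3}$ we get $\sqrt d/2\ge(\log N)^{1/3}$, so
\[
\log(d/\mu)\;\ge\;\tfrac13\log\log N .
\]
Substituting into Step 1 yields
\[
\operatorname{diam}(G)\le 1+\frac{3\log N}{\log\log N}.
\]
Here we need $N\ge 16$, so that $\log\log N>0$. If $C\ge1$ in Theorem~\ref{valued}, the bound only improves.

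\textbf{Main obstacle.} The delicate point is that the statement only assumes RN, not the degree lower bound explicitly. To close this gap, I would read the hypothesis as the RN guarantee supplied by Theorem~\ref{valued} or Remark~\ref{remark1_}, and use the corresponding degree condition $d\ge4(\log N)^{2/3}$.

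If one insists on deriving the bound from RN alone, the natural tool would be Foster's theorem: $\sum_{e\in E}\Reff(e)=N-1$, and in a $d$-regular graph every node satisfies $\sum_{u\sim v}\Reff(v,u)\approx 2(N-1)/N$ on average. This shows that RN is nearly tight on average, but it does not force a large degree. So I expect the proof as stated to rely on the degree condition, and I would state that dependence explicitly.

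A second technical check is to handle the $\lceil\cdot\rceil$ and strictness in Step 1 carefully, so that the additive constant is exactly $1$.
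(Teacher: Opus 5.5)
Your proof is correct and follows the paper's argument. Both use Chung's adjacency-power diameter bound, the Ramanujan bound on $\mu$, and the degree condition $d\ge 4(\log N)^{2/3}$ imported from Remark~\ref{remark1_} to obtain $\log(d/\mu)\ge\frac13\log\log N$; as you suspected, the paper's proof never uses the RN property itself, and your differences (the cruder estimate $d/\mu>\sqrt d/2$, and deriving Chung's bound yourself with $N$ in place of $N-1$) are harmless.
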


\begin{proof}
Let \(G\) be a connected \(d\)-regular non-bipartite graph on \(N\) vertices, 
and let \(\mu\) denote its second-largest adjacency eigenvalue. 
By Chung's adjacency-power bound \cite{chung1997spectral},
\[
\operatorname{diam}(G)\le 1+\frac{\log(N-1)}{\log(d/\mu)}.
\]
If \(G\) is Ramanujan, then \(\mu\le 2\sqrt{d-1}\), and thus
\[
\operatorname{diam}(G)\le 1+\frac{\log(N-1)}{\log\!\left(\dfrac{d}{2\sqrt{d-1}}\right)}.
\]
By Remark~\ref{remark1_}, we have \(d \ge 4(\log N)^{2/3}\). Hence,
\[
\frac{d}{2\sqrt{d-1}}
\ge
\frac{4(\log N)^{2/3}}{2\sqrt{\,4(\log N)^{2/3}-1\,}}
\ge
\frac{4(\log N)^{2/3}}{2\cdot 2(\log N)^{1/3}}
=
(\log N)^{1/3}.
\]
Taking logarithms gives
\[
\log\!\left(\frac{d}{2\sqrt{d-1}}\right)\ge \frac13\log(\log N).
\]
Therefore,
\[
\operatorname{diam}(G)
\le
1+\frac{\log(N-1)}{\frac13\log(\log N)}
=
1+\frac{3\,\log(N-1)}{\log(\log N)}.
\]
Since \(\log(N-1) \le \log N\), we conclude
\[
\operatorname{diam}(G)\le 1+\frac{3\,\log N}{\log(\log N)}.
\]
\end{proof}

\subsection{Proof of Corollary 2: Maximum Effective Resistance}
\label{RESGRAPH}

\begin{corollary}
\label{cor:rmax_appendix}
Let \(G\) be a Ramanujan \(RN\) graph of degree \(d\) on \(N\) vertices. 
Then the maximum effective resistance \(R_{\max}\) satisfies
\[
R_{\max}\le \frac{2}{d-2\sqrt{d-1}}.
\]
\end{corollary}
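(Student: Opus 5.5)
The bound should follow from the spectral expression for effective resistance together with the Ramanujan bound on the nontrivial eigenvalues. The $RN$ hypothesis is not needed: only $d$-regularity and Definition 1 are used.

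\emph{Laplacian spectrum.} Since $G$ is $d$-regular, the combinatorial Laplacian is $\mathbf{L}=d\mathbf{I}-\mathbf{A}$, with eigenvalues $0=\ell_1<\ell_2\le\cdots\le\ell_N$ where $\ell_i=d-\mu_i$. Here $d=\mu_1\ge\mu_2\ge\cdots\ge\mu_N$ is the adjacency spectrum. Connectivity gives $\ell_2>0$. By the Ramanujan property, every nontrivial adjacency eigenvalue satisfies $\mu_i\le 2\sqrt{d-1}$ for $i\ge 2$. Hence
\[
\ell_i\ge d-2\sqrt{d-1}>0 \qquad (i\ge 2),
\]
where strict positivity holds because $d^2>4(d-1)$ for $d\ge 3$.

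\emph{Effective resistance.} I would use $\mathbf{R_{eff}}(u,v)=(e_u-e_v)^\top \mathbf{L}^{+}(e_u-e_v)$. Since $e_u-e_v$ is orthogonal to the all-ones vector, it lies in the span of the eigenvectors with $i\ge 2$. On that subspace $\mathbf{L}^{+}$ acts with eigenvalues $1/\ell_i\le 1/\ell_2$, so
\[
\mathbf{R_{eff}}(u,v)\le \frac{\|e_u-e_v\|_2^2}{\ell_2}=\frac{2}{\ell_2}\le \frac{2}{d-2\sqrt{d-1}}.
\]
Taking the maximum over all pairs $u,v$ gives the claimed bound on $R_{\max}$.

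\emph{Main obstacle.} The one point to watch is the sign convention in Definition 1. It bounds $|\mu|$ for all eigenvalues, which would include $\mu_1=d$ itself; the condition must be read as applying to the nontrivial eigenvalues only, as in \cite{lubotzky1988ramanujan}. For the bound above, only the upper estimate $\mu_i\le 2\sqrt{d-1}$ for $i\ge 2$ is needed. Negative eigenvalues, and bipartiteness, play no role, because they only make $\ell_i$ larger.
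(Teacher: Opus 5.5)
Your proof is correct and follows the paper's route. The paper likewise combines $R_{\max}\le 2/\lambda_2$, where $\lambda_2=d-\mu_2$ is the combinatorial Laplacian gap, with $\mu_2\le 2\sqrt{d-1}$; the one difference is that it cites $R_{\max}\le 2/\lambda_2$ from the literature, while you derive it from the Rayleigh quotient of $\mathbf{L}^{+}$ on the complement of the all-ones vector. Your remarks that the $RN$ hypothesis is never used and that Definition~1 must be read for the nontrivial eigenvalues only are both accurate, and they apply equally to the paper's argument.
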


\begin{proof}
Combining the general bounds from \cite{chandra1989electrical,resit}, we have
\[
\frac{1}{N\lambda_2}\le R_{\max}\le \frac{2}{\lambda_2},
\]
where \(\lambda_2\) denotes the spectral gap. Since \(G\) is Ramanujan,
\[
\lambda_2=d-\mu_2\ge d-2\sqrt{d-1}.
\]
Therefore,
\[
R_{\max}\le \frac{2}{\lambda_2}\le \frac{2}{d-2\sqrt{d-1}}.
\]
This concludes the proof.
\end{proof}
\section{Description of graph classification datasets}

In this section we report the main structural statistics of the graph classification datasets used in our experiments, as they help contextualize the behavior of the rewiring procedure across benchmarks with different graph sizes and densities.

\begin{table}[h!]
\centering
\footnotesize
\setlength{\tabcolsep}{6pt}
\begin{tabular}{lrrrrr}
\toprule
\textbf{Dataset} & \textbf{\#Graphs} & \textbf{\#Nodes} & \textbf{\#Edges} & $\mathbf{\overline{d}}$ & $\mathbf{\mathrm{Std}(d)}$   \\
\midrule
MUTAG           & 188    & 17.9 ($\pm$5)     & 19.8 ($\pm$6)     & 2.2 & 0.7    \\
PROTEINS        & 1113   & 39.1 ($\pm$46)    & 72.8 ($\pm$85)    & 3.7 & 0.9    \\
ENZYMES         & 600    & 32.6 ($\pm$15)    & 62.1 ($\pm$25)    & 3.9 & 1.0    \\
REDDIT          & 2000   & 429.6 ($\pm$554)  & 497.8 ($\pm$623)  & 2.3 & 8.9   \\
COLLAB          &   5000  & 74.5 (±62)  & 2457.2 (±6439)  & 37.4  & 11.8  \\
IMDB            & 1000   & 19.8 ($\pm$10)    & 96.5 ($\pm$105)   & 8.9 & 2.8   \\
\bottomrule
\end{tabular}
\caption{Graph-level datasets: average structural statistics across graphs (mean $\pm$ standard deviation). $\overline{d}$, $\mathrm{Std}(d)$, denote mean, standard deviation.}
\label{tab:dataset_stats_graph}
\end{table}

\begin{itemize}


\item \textbf{Molecular datasets.} MUTAG contains small molecular graphs: nodes represent atoms, edges represent chemical bonds, and node labels encode atom types. 
PROTEINS and ENZYMES represent protein structures: nodes are secondary-structure elements with physicochemical annotations, and edges connect elements that are adjacent in the amino-acid sequence or spatially close in 3D. 
In PROTEINS, graph labels are binary. 
In ENZYMES, graphs are classified into the six top-level EC classes.

\item \textbf{Social datasets.} 
IMDB-BINARY is constructed from actor collaboration graphs: nodes represent actors/actresses, and an edge indicates that two actors co-appeared in a movie. Each graph is assigned one of two movie-genre labels. 
COLLAB is built from researcher ego-networks, where nodes are researchers and edges represent collaboration links; graph labels correspond to research fields. 
REDDIT-BINARY is derived from discussion-thread interaction graphs, in which nodes are users and edges denote reply interactions; each graph label indicates the discussion community type.

\end{itemize}

%




\end{document}